\documentclass[journal]{IEEEtran}

\usepackage[compress]{cite}
\usepackage{amsthm, amsmath, stackrel}
\usepackage{amssymb}
\usepackage{graphicx}
\usepackage{mathrsfs}
\usepackage{mathtools}
\usepackage{savesym}
\usepackage{cuted}
\usepackage{nccmath}
\usepackage{amsfonts}
\usepackage{epstopdf}
\usepackage{lipsum}
\usepackage{float}
\usepackage{stfloats}
\usepackage{url}
\usepackage{subcaption}
\usepackage{algorithm}
\usepackage{algpseudocode}
\usepackage{algcompatible}
\usepackage[super]{nth}
\usepackage[english]{babel}
\usepackage{multirow}
\usepackage{makecell}
\usepackage{soul}
\usepackage[dvipsnames]{xcolor}

\providecommand{\tabularnewline}{\\}

\providecommand{\U}[1]{\protect\rule{.1in}{.1in}}

\newtheorem{lemma}{Lemma}
\newtheorem{proposition}{Proposition}

\makeatother

\hyphenation{op-tical net-works semi-conduc-tor}

\begin{document}

\title{Budgeted Online Selection of Candidate IoT Clients to Participate in Federated Learning}

\author{Ihab Mohammed,~\IEEEmembership{Member,~IEEE}, Shadha Tabatabai,~\IEEEmembership{Graduate Student Member,~IEEE},\\ Ala Al-Fuqaha,~\IEEEmembership{Senior Member,~IEEE}, Faissal El Bouanani,~\IEEEmembership{Senior Member,~IEEE}, Junaid Qadir,~\IEEEmembership{Senior Member,~IEEE}, Basheer~Qolomany,~\IEEEmembership{Member,~IEEE}, and
Mohsen~Guizani,~\IEEEmembership{Fellow,~IEEE}

\thanks{I. Mohammed is with the School of Computer
Sciences, Western Illinois University, Macomb, IL 61455 USA (e-mails: i-mohammed@wiu.edu).}

\thanks{S. Tabatabai is with the Department of Computer
Science, Western Michigan University and Computer Science Department, Al-Nahrain University (e-mails: shadhamuhinoo.tabatabai@wmich.edu).}

\thanks{A. Al-Fuqaha is with the Information and Computing Technology Division, Hamad Bin Khalifa University and Computer Science Department, Western Michigan University (e-mail:ala@ieee.org).}

\thanks{
F. El Bouanani is with Mohammed V University, Rabat, Morocco (e-mail: f.elbouanani@um5s.net.ma).}

\thanks{J. Qadir is with Information Technology University, Lahore, Pakistan  (e-mail: 
junaid.qadir@itu.edu.pk).}

\thanks{B. Qolomany is with  Department of Cyber Systems, College of Business \& Technology, University of Nebraska at Kearney, Kearney, NE 68849, USA (e-mail: qolomanyb@unk.edu).} 

\thanks{M. Guizani is with the Computer Science and Engineering Department, Qatar University, Doha, Qatar (e-mail: 
mguizani@ieee.org).}

}
\maketitle

\begin{abstract}

Machine Learning (ML), and Deep Learning (DL) in particular, play a vital role in providing smart services to the industry. These techniques however suffer from privacy and security concerns since data is collected from clients and then stored and processed at a central location. Federated Learning (FL), an architecture in which model parameters are exchanged instead of client data, has been proposed as a solution to these concerns. Nevertheless, FL trains a global model by communicating with clients over communication rounds, which introduces more traffic on the network and increases the convergence time to the target accuracy. In this work, we solve the problem of optimizing accuracy in stateful FL with a budgeted number of candidate clients by selecting the best candidate clients in terms of test accuracy to participate in the training process.
Next, we propose an online stateful FL heuristic to find the best candidate clients. Additionally, we propose an IoT client alarm application that utilizes the proposed heuristic in training a stateful FL global model based on IoT device type classification to alert clients about unauthorized IoT devices in their environment. To test the efficiency of the proposed online heuristic, we conduct several experiments using a real dataset and compare the results
against state-of-the-art algorithms. Our results indicate that the proposed heuristic outperforms the online random algorithm with up to 27\% gain in accuracy. Additionally, the performance of the proposed online heuristic is comparable to the performance of the best offline algorithm.

\end{abstract}

\begin{IEEEkeywords}
Internet of things, federated learning, machine learning, deep learning, classification, online algorithms, secretary problem.
\end{IEEEkeywords}
\IEEEpeerreviewmaketitle

\section{Introduction}
\label{introduction_sec}

The fourth industrial revolution (Industry 4.0) promises the provisioning of smart services that enhance the manufacturing process by utilizing emerging technologies such as Internet of Things (IoT) and Artificial Intelligence (AI) \cite{industrialAI_19} \cite{massiveIoT_20}. In particular, most of the recent advances in Industry 4.0 and AI are driven by Machine Learning (ML), a branch of AI, and more specifically by Deep Learning (DL) \cite{industrialAI_19} \cite{overview_19} \cite{towards_19}.

The ML and DL techniques however require a large amount of data for the training of their models. In particular, serious privacy and security concerns crop up when data is collected and processed from scattered organizations and users \cite{concept_application_19} \cite{vision_hype_19}. For instance, the prediction of patient mortality using Electronic Health Record (EHR) data dispersed over many hospitals is a complex undertaking due to the various privacy, security, regulatory, and operational issues \cite{FADL_19}. Additionally, the communication of potentially large amounts of data from the clients to a central server is costly and can choke the networks when limited bandwidth is available \cite{demo_19}. Such bottlenecks can be observed in Vehicular Edge Computing (VEC) where vehicles have to send their data such as images to roadside servers to build models, which results in the networks being greatly burdened \cite{veh_edge_comp_20}.

To address the issues of security, privacy, and excessive communication cost, the technique of Federated Learning (FL) \cite{google_17}, a distributed ML approach that runs on a server and multiple clients, was proposed. The server and the clients use the same model architecture. The server initiates the global model (i.e., the server model) and executes the following steps over several communication rounds \cite{demo_19} \cite{google_17}:

\begin{itemize}
\item The server sends the global model's parameters to some (or if possible all) clients;
\item Every participating client uses the received global parameters to train the local model using the local dataset;
\item Every participating client sends the local model parameters to the server;
\item The server aggregates the local parameters received from the clients to update the global model;
\item Eventually, the accuracy of the global model converges to some threshold. 
\end{itemize}

In FL, the server has no access to the client's local dataset since only the local model parameters are shared with the server. Consequently, privacy and security are preserved and communication cost is reduced. However, FL suffers from the following two problems \cite{challenges_19}:

\begin{itemize}
\item Convergence may take a long time, which increases the communication cost.
\item Clients have different computation, storage, and communication resources and different dataset sizes, which makes the task of selecting clients a challenge.
\end{itemize}

FL can be \textit{stateful} or \textit{stateless}. In stateful FL, a candidate client can participate in each of the communication and computation rounds used in training the global model and thus the state is preserved between rounds. Nevertheless, in stateless FL, a candidate client will likely participate in one communication and computation round to train the global model, which means in each round, new fresh candidate clients are utilized \cite{Advances_federated_2019}.

In this paper, we propose a stateful FL model with a budgeted number of candidate clients to overcome communication and computation constraints. In other words, from a total of $N$ candidate clients, we select the best $R<N$ candidate clients to participate in training the global model. Now, some candidate clients become available while others become offline or out of communication range over time. Also, we assume that not all candidate clients are available at the same time. Meaning that the problem of selecting $R$ candidate clients is an online problem. As a result, the selection of candidate clients is a challenge. In offline problems, information about all candidate clients are well known in advance rendering the problem of selecting the best $R$ candidate clients trivial. However, in online problems, once a candidate client becomes available then an irrevocable decision must be made on the selection of this candidate client without any prior knowledge about incoming candidate clients. Consequently, we propose a budgeted online selection algorithm that selects the best $R$ candidate clients based on their evaluated test accuracy. The proposed algorithm is inspired by the solution of the secretary problem.

The proposed algorithm can be used in different applications, particularly for online applications with intermittently available mobile clients. Once a client is available, a decision must be made on whether to utilize the client or not since the client may become unreachable like out of communication range or offline \cite{iot_Shadha_2020}. However, once the client is selected, the client will be utilized. Furthermore, decisions cannot be revoked in online applications but might be regretted. 

Detection and identification of unauthorized IoT devices are very important especially with the increase in the number of attacks on IoT devices \cite{forged_19}. Therefore, we propose a clients' alarm application that alerts clients about unauthorized IoT devices in their environment. Each client uses a local machine (i.e. server) to monitor the traffic generated by IoT devices in the environment and extract features based on IoT device behavior. Extracted features are used to identify the IoT device type by training an ML model on those features. This is known as \textit{IoT device type classification}. However, clients can not identify unknown IoT devices in their environment depending only on the local dataset. Therefore, clients subscribe to the alarm service provided by the server on the cloud that utilizes the proposed algorithm. Clients share their model's parameters with the server to train a global model capable of identifying unauthorized IoT devices.

The salient \textit{contributions of this paper} are:
\begin{itemize}
\item We propose a model for optimizing accuracy in stateful federated learning by selecting the best candidate clients based on test accuracy. We formulate the problem of maximizing the probability of selecting the best $R$ candidate clients based on test accuracy from $N$ total candidate clients as a secretary problem and analytically analyze the performance and provide proofs. 

\item We propose an online heuristic solution for optimal budgeted client selection based on test accuracy inspired by the secretary problem that works in stateful FL settings. To the best of our knowledge, this is the first work that utilizes online resources selection in federated learning.

\item We propose a client alarm application for identifying unauthorized IoT devices using the proposed algorithm and IoT device type classification. We conduct many experiments to evaluate the performance of the proposed heuristic against other state-of-the-art algorithms. Results show an improvement of up to 27\% in accuracy compared with the online random algorithm and an accuracy gain of approximately 10\%  compared with the offline best algorithm.
\end{itemize}

The organization of the remainder of the paper is as follows. A background regarding FL is discussed in section \ref{background}. Related literature is reviewed in Section \ref{related_work}. 
Section \ref{heuristic_solution} provides a heuristic solution. Section \ref{performance_analysis} provides performance proofs including analysis for the worst-case scenario. Experimental results are provided in Section \ref{experimental_results}, where we discuss the application, the dataset (and its preprocessing phases), and the conducted experiments. A discussion of the results and the salient lessons learned are provided in Section \ref{result_discussion}. Finally, the paper is concluded in Section \ref{conclusion_future_work} by summarizing this work and identifying future research directions.

\section{Background}
\label{background}

In this section, we introduce the properties and challenges of FL. Then, we review studies related to the online selection of resources using the optimal stopping theory and the secretary problem in particular.

\subsection{Federated Learning}

To understand the concepts of FL systems, Li \textit{et al.} \cite{vision_hype_19} provide a comprehensive study of FL systems. They categorize FL systems based on six features including machine learning model, communication architecture, data partition, privacy mechanism, motivation of federation, and scale of federation. Additionally, the authors present a summary of a comparison that includes 42 studies based on the six proposed features. In \cite{concept_application_19} and \cite{vision_hype_19}, the authors categorize FL based on data distribution as:

\begin{itemize}
\item \textit{Horizontal Federated Learning}: datasets of clients share the same feature space but with a small intersection in regards to the sample space.
\item \textit{Vertical Federated Learning}: datasets of clients share the same sample space but with a small intersection in regards to the feature space.
\item \textit{Hybrid Federated Learning} (\textit{Federated Transfer Learning}): datasets of clients have a small intersection in regards to both the feature and sample spaces.
\end{itemize}

We would like to emphasize that a Hybrid Federated Learning approach is used in this work. The main challenges in implementing FL, as described in \cite{challenges_19} and \cite{edge_comp_19}, are:

\begin{itemize}
\item \textit{Communication cost}: there could be many clients (millions) and the system may execute many rounds before converges to the required level of accuracy, which imposes an overload on the network.
\item \textit{Clients heterogeneity}: the system is heterogeneous and has clients with varying computation, storage, and communication capabilities. Also, the client datasets may differ in features and samples (i.e., the datasets may have statistical heterogeneity).
\item \textit{Privacy and security}: FL already protects clients' data by only sharing models' parameters. However, sensitive information may be revealed.
\end{itemize}

Researchers \cite{edge_comp_19} and \cite{towards_at_scale_19} have highlighted the importance of client selection for enhancing the performance of FL systems since it contributes to both communication cost and resource allocation.

Existing research on enhancing performance in FL follow one of the following approaches:

\begin{itemize}
\item \textit{Algorithm Optimization}: optimize the FL algorithm and perform more computation on clients to reduce the convergence time by reducing the number of rounds on the expense of more computation \cite{adaptive_19, comm_efficient_19, multi_objective_19, momentum_20, vertical_19, two_stream_18, client_edge_19, offloading_19}.

\item \textit{Selective Updates}: select only important updates from the clients or select the best clients in regards to the clients' resources and data size \cite{performance_19, client_sel_19, mitigation_19, crowd_machine_19, res_alloc_19, hybrid_fl_19}.

\item \textit{Model Compression}: reduce the amount of data exchanged between clients and the server \cite{comm_efficient_19, compression_19, strategies_18, expanding_18}.
\end{itemize}

\subsection{Secretary Problem}

The secretary problem, which is also known as the marriage problem, dowry problem, beauty contest problem, or Googol is a class of the optimal stopping decision problems. The secretary problem was first introduced by Martin Gardner back in 1960 \cite{who_solved_1989}. The classical secretary problem focuses on the selection of a secretary from a pool of candidates adhering to the following rules  \cite{who_solved_1989} \cite{best_worst_2019}:
\begin{itemize}
    \item The number $N$ of candidates is known,
    \item Only one candidate is to be chosen,
    \item Candidates are interviewed sequentially in random order,
    \item Each candidate must be accepted or rejected before interviewing the next one (with no provision for recalling rejected candidates later),
    \item Candidates are ranked from best to worst and the decision of accepting or rejecting a candidate depends on the relative ranks of candidates interviewed so far,
    \item The problem focuses on maximizing the probability of selecting the best candidate.
\end{itemize}

The solution of the secretary problem is for some integer $1 \leq \alpha < N$, reject the first $\alpha$ candidates then select the first candidate with rank better than of those observed candidates. The goal is to find the optimal $\alpha$ that maximizes the probability of selecting the best candidate. Actually, it has been proven that the optimal value for $\alpha$ is 0.367879 with optimal probability of $\frac{1}{e}$ \cite{who_solved_1989}. In other words, the probability of finding the best candidate is 37\% when rejecting the first 37\% of candidates and selecting the first candidate with ranking better than those observed ones.

The authors in \cite{secretary_review_1983} reviewed the extensions and generalizations of the secretary problem. They indicate that some researchers focus on the secretary problem when the number of candidates is unknown. Other researchers assume that candidates' ranks follow a specific distribution such as Poisson. Additionally, they show that some studies focus on selecting $R$ candidate instead of one.

In this paper, we are interested in studies of the secretary problem where $R$ top candidates are selected. In \cite{recog_1966}, the authors provide many variations of the secretary problem studied under different assumptions and one of these cases is for selecting $R$ candidates with one of the candidates as the best candidate. Kleinberg proposed an algorithm to maximize the sum of ranks of the $R$ selected candidates \cite{mit_auctions_2005}. The algorithm has two stages. In the first stage, the classical secretary algorithm is recursively applied on roughly the first half of candidates to select $l=R/2$ best candidates. In the second stage, the rank of the $l$th selected candidate in the first stage is used as a threshold for selecting $R/2$ candidates from the second half of candidates. The author states that the algorithm has a competitive ratio of $1-O(\sqrt{q/R})$. In \cite{online_auctions_2008}, the authors propose an algorithm to maximize the sum of the $R$ selected candidate. The algorithm rejects the first $\lfloor n/e \rfloor$ candidates and records the $R$ highest rankings in set $S$. Next, when a candidate with a rank higher than the minimum rank in $S$ is encountered, the candidate is selected and the minimum rank in $S$ is removed. This is repeated until either $S$ is empty or all candidates are reviewed. The authors indicate that the algorithm has a competitive ratio no worse than $e$ for all values of $R$.

The work in this paper is inspired by the aforementioned studies. However, this work is different in that we find the optimal stopping position $\alpha$, which we call $\alpha^*$, to maximize the probability of selecting the $R$ top candidates. We reject the first $\alpha^*$ candidates and record the best rank. Then, we use the best rank as a threshold in selecting the top $R$ candidates.

\section{Related Work}
\label{related_work}

FL is a hot research area that has recently grabbed the attention of many researchers. In this section, we list different approaches for enhancing the performance and discuss studies in each approach.

\subsection{Algorithm Optimization}

Some researchers work on optimizing the algorithm used in FL to reduce the convergence time and thus reduce the generated traffic in the network. Replacing the minibatch Stochastic Gradient Descent (mb-SGD) optimization model with \textit{Adam} has been studied in \cite{comm_efficient_19}. The authors propose CE-FedAvg, an algorithm that uses Adam optimization and compresses models before uploading to the server. The authors claim that using Adam optimization along with model compression reduces the convergence time by reducing the number of rounds and the amount of data exchanged between clients and the server. Using a multi-objective evolutionary algorithm with neural networks in FL has been studied in \cite{multi_objective_19}. The authors use the Elitist Nondominated Sorting Genetic Algorithm (NSGA-II) to minimize the communication cost at the expense of higher computation cost. In \cite{momentum_20}, researchers propose Momentum Federated Learning (MFL), which uses Momentum Gradient Descent (MGD) in every step of local updates rather than the first-order gradient descent. Authors state that since MGD consider preceding iteration, it converges faster than the traditional FL system.

Other researchers proposed algorithms that utilize the computation power on clients' machines to speed up the convergence process. To reduce the number of rounds, Liu et al. \cite{vertical_19} propose to use Federated Stochastic Block Coordinate Descent (FedBCD) algorithm in vertical FL, which let clients do multiple local model updates before syncing with each other. Authors in \cite{two_stream_18} claim that using two models in every client instead of a single model can reduce the number of rounds. Besides training the global model received from the server, each client trains another local model and uses the Maximum Mean Discrepancy (MMD) between the output of the two models. Using agents on edge nodes between clients and the server are studied in \cite{client_edge_19} and \cite{offloading_19}. Multiple agents perform partial model aggregation before communicating with the server to reduce the communication cost between clients and the server.

Other researchers study the trade-off between the number of iterations performed by clients to minimize the loss function and the frequency of global aggregation done by the server. In \cite{adaptive_19}, the authors compute the convergence bound of the gradient-descent algorithm then designed an algorithm that finds the best frequency of global aggregation based on system dynamics, model characteristics, and data distribution to minimize the consumed computation and communications. Qolomany et al. \cite{qolomany_particle_2020} proposed a Particle Swarm Optimization (PSO)-based technique to optimize the hyperparameter settings for the local ML models in an FL environment. They evaluated and compared the proposed PSO-based parameter optimization approach with the grid search technique. They found that the number of client-server communication rounds to explore the landscape of configurations to find the near-optimal parameter settings is greatly decreased by two orders of magnitude using the PSO-based approach compared to the grid search method. To deal with heterogeneous data inherent in federated networks, Li et al. \cite{li_federated_2020} proposed a modified version of FedAvg; namely, FedProx, that allows for variable amounts of work to be performed locally across devices, and relies on a proximal term which helps to improve the stability of the method against heterogeneous data.

\subsection{Selective Updates}

In \cite{performance_19}, the authors formulated a client selection and resource allocation optimization problem for FL in wireless networks to minimize the value of the loss function. They first derived an equation to represent the expected convergence rate of the FL algorithm. Next, they simplified the optimization problem as a mixed-integer nonlinear programming problem. Then for a given uplink resource block allocation and client selection, they compute the optimal transmit power. Finally, they transform the problem into a bipartite matching problem and use the Hungarian algorithm to find the optimal client selection and resource block allocation. Nishio and Yonetani \cite{client_sel_19} propose a new FL protocol named FedCS to enhance the efficiency of FL. The basic idea of the proposed protocol is to select clients based on their computation/communication capabilities and their data size instead of picking clients randomly. To reduce the communication overload, authors in \cite{mitigation_19} propose an approach that identifies clients with irrelevant updates and prevent those clients from uploading their updates to the server. In \cite{crowd_machine_19} and \cite{res_alloc_19}, the authors proposed the selection of clients based on the consumed energy in model's transmission and training, clients' distance from the server, and channel availability using Deep Reinforcement Learning (DRL) approach. Yoshida \textit{et al.} \cite{hybrid_fl_19} propose a hybrid FL approach based on the assumption that some clients share and upload their data to the server to improve the accuracy and mitigate the degradation resulted from non-independent-and-identically-distributed (non-IID) data. However, uploading clients' data to the server violates the rules of FL.

\subsection{Model Compression}

Sattler \textit{et al.} \cite{compression_19} proposed a new compression framework named Sparse Ternary Compression (STC). Authors claim that their compression framework performs better than other proposed methods in the literature in bandwidth-constrained learning environment. In \cite{strategies_18}, the authors propose to use structured updates (low rank and random mask) and force models to use these structures and also sketched updates with lossy compression before sending models to the server. On the other hand, Caldas \textit{et al.} \cite{expanding_18} apply lossy compression on the model sent from the server to the clients.

The related research discussed above has a high computational cost. The clients' intensive computation and algorithm optimization approach requires intensive computation. In addition to the extra computation required by the compression approach, it is best applied to models with large parameter vector such as images or models with many hidden layers. The presented studies using the selective updates approach either are too difficult to train (especially when a large number of clients are used as in DRL) or have no analysis and/or proofs for convergence.
In contrast, our proposed algorithm, which uses a selective updates approach, does not require intensive computations or a large parameter vector, and we also provide analysis and proofs demonstrating convergence.

\begin{table}[!th]
\caption{Summary of mathematical notations.}
\label{table_notations}
\centering
\begin{tabular}{p{0.16\textwidth}|p{0.28\textwidth}}
\hline
\thead{\textbf{\textit{Notation}}} & \thead{\textbf{\textit{Definition}}} \tabularnewline
\hline 
$N$ & Total number of candidate clients arriving until time $T$. Each candidate client is identified by an index in the interval $1..N$ \tabularnewline
\hline
$K$ & Number of communication rounds \tabularnewline
\hline
$E$ & Number of epochs \tabularnewline
\hline
$R$ & Number of required best candidate clients \tabularnewline
\hline
$\left(\mathcal{C}_{\ell}\right)_{1\leq\ell\leq N}$ & Set of candidate clients \tabularnewline
\hline
$\alpha$ & An index in the interval $1 .. N$ \tabularnewline
\hline
$\alpha^*$ & The optimal value of $\alpha$ \tabularnewline
\hline
$\mathcal{C}_{M}$ & The best candidate client in $\left[1..\alpha\right]$,
i.e., $1\leq M\leq\alpha$ \tabularnewline
\hline
$\left(i_{m}\right)_{1\leq m\leq R}$ & The set of $R$ positions corresponding to the top $R$ best candidate clients in the interval $\left[\alpha+1..N\right]$, better than $\mathcal{C}_{M}$\textbf{,} such that $\alpha+1\leq i_{m}\leq N$ and
$\mathcal{C}_{i_{m}}$ is worst than $\mathcal{C}_{i_{m-1}}$ for all $2\leq m\leq R$ \tabularnewline
\hline
$\left\{ \mathcal{A}^{(i)}\right\} _{\alpha+1\leq i\leq N}$ &
The set of events where ``the $i$th candidate better than $\mathcal{C}_{M}$ is selected''  \tabularnewline
\hline
$\left\{ \mathcal{B}^{(m,i)}\right\} _{1\leq m;\alpha+1\leq i\leq N}$ & The set of events where ``the $i$th candidate is the $m$th best one'' \tabularnewline
\hline
$\mathcal{E}_{R}$ & The event ``The $R$ best candidates in $\left[\alpha+1..N\right]$
better than $\mathcal{C}_{M}$ are selected'' occurring with the probability
$\Pr\left(\mathcal{E}_{R}\right)$; Obviously, $\stackrel[\ell=1]{R}{\cap}$ $\mathcal{E}_{R}=\stackrel[i_{R}\leq i_{R-1}..\leq i_{1}]{}{\cup}\stackrel[\ell=1]{R}{\cap}\left\{ \mathcal{A}^{\left(i_{\ell}\right)}\cap\mathcal{B}_{R}^{\left(\ell,i_{\ell}\right)}\right\} $
with $i_{1}$ and $i_{R}$ correspond to the best and the worst combinations, among $R$ selected ones, respectively \tabularnewline
\hline
$r_1$ & Minimum number of best candidate clients.
\tabularnewline
\hline
$r_2$ & Maximum number of best candidate clients.
\tabularnewline
\hline
$\mathcal{P}^{(r_{1},r_{2})}$ & Probability to select the \textbf{$R$} best candidates $\left(i_{m}\right)_{1\leq m\leq R}$ where $r_{1}\leq R\leq r_{2}$ and $r_{1},r_{2}<N.\mathcal{P}^{(r_{1},r_{2})}=\sum_{R=r_{1}}^{r_{2}}\Pr\left(\mathcal{E}_{R}\right)$.  \tabularnewline
\hline 
$P_i$ & Probability of selecting candidate $i$
\tabularnewline
\hline
\end{tabular}
\end{table}

\section{Proposed Client Selection Solution}
\label{heuristic_solution}

\subsection{System Model}
\label{system_model}

We assume $N$ candidate clients and one server. Also, we assume a budget of $R$ candidate clients. The nature of the proposed model is online since some clients become available while others become unreachable or offline over time. Consequently, the server must make an irrevocable decision to accept (i.e. select) or reject a candidate client once a candidate client becomes available. The server runs the proposed heuristic (explained in the next section), which initializes the global model, selects the $R$ best candidate clients based on their test accuracy, and then train the global model using the selected candidate clients in $K$ communication rounds. Each selected candidate client trains the local model in $E$ epochs using the local dataset but with the global model parameters. Moreover, we assume the datasets of candidate clients are different in size. Therefore, we use the terms \textit{fat clients} and \textit{thin clients} to point to candidate clients with different sizes of datasets. We note that in some literature, the terminology of elephants (instead of fat) and mice (instead of thin) is used instead \cite{mice_2001}. For the convenience, we have listed the main mathematical notations used in this paper in Table \ref{table_notations}.

\subsection{Problem Formulation}

The problem we tackle in this paper is to select the best set of candidate clients that provide higher test accuracy when training the global model using their local dataset. This problem is similar to the famous \textit{secretary problem}, which aims to maximize the probability of selecting the maximum element from a randomly ordered sequence \cite{secretary_08}. The secretary problem is formulated as a linear programming problem as follows \cite{secretary_lp_2014}:

\begin{align*}
&\max \frac{1}{N}\cdot \sum_{1=1}^{N} iP_i\\
&s.t. \quad \forall 1 \leq i \leq N \quad i \cdot P_i \leq 1 - \sum_{j=1}^{i-1} P_j \\
&\qquad \forall 1 \leq i \leq N \quad P_i \geq 0
\end{align*}

In the traditional secretary problem, the objective function aims to maximize the probability of selecting the best candidate. However, instead of selecting one element, in this problem, $R$ elements must be selected. The secretary problem is one scenario of the \textit{optimal stopping theory}. In the secretary problem, an employer wants to hire a secretary and there are $N$ candidates. The employer cannot assess the quality of a candidate until after the end of the interview and have to make an irrevocable hiring decision. Thus, the employer may end up hiring a candidate before interviewing the rest of the candidates and the hiring of the best candidate is not guaranteed.

Our solution is inspired by the \textit{secretary problem}. The quality of a client is determined by its test accuracy. We evaluate the test accuracy (i.e., quality) of the first $\alpha^*$ (see section \ref{performance_analysis}) clients and reject them all. Then, select the next $R$ clients with test accuracy better than the best test accuracy of the first $\alpha^*$ clients, and if none is found then select the last clients.

\subsection{Proposed Algorithm}
\label{proposed_algorithm}

The proposed heuristic identifies the best test accuracy among the first few available candidate clients then use this test accuracy as a threshold for accepting or rejecting candidate clients available later. The heuristic accepts the parameters $N$, $R$, $r_1$, $r_2$, $K$, $E$, and $\delta$ as explained in Algorithm \ref{alg_proposed} and consists of \textit{three stages} that run every $\delta$ time units to update the global model.

In the \textit{first stage} (Algorithm \ref{alg_proposed}, lines 2 through 11), the value of $\alpha^*$ (discussed in Section \ref{performance_analysis}) is computed based on the value of $r_1$ and $r_2$ using equation (\ref{alphastar}). The first $\alpha^*$ candidate clients that are available are then tested to determine the best test accuracy. However, none of those candidate clients are accepted. Whenever a candidate client becomes available, the server initializes the global model and communicates with the candidate client to evaluate its test accuracy. Testing is performed by sending the initialized global model's parameters from the server to the candidate client for one communication round so that the candidate client trains the local model with these parameters using the local dataset. Then, the candidate client sends back the updated parameters to the server. The server evaluates the received parameters (i.e., no averaging is applied since only one candidate client is involved) using the test dataset to determine the test accuracy of the candidate client. After testing $\alpha^*$ candidate clients, the server selects the best test accuracy to be used as a threshold in the second section.

\begin{algorithm}[!t]
\footnotesize
\caption{Proposed heuristic}
\label{alg_proposed}
\begin{algorithmic}[1]
\STATEx \textbf{Input}: $N$ (expected number of clients), $R$ (number of selected candidate clients), $r_1$, $r_2$ (to compute $\alpha^*$), $K$ (number of communication rounds), $E$ (number of epochs per client), and $\delta$
\STATEx \textbf{Output}: Trained global model

\FOR {every $\delta$ time units}
\STATEx
\begin{center}
\textbf{\textit{// Find best test accuracy for first $m$ candidate clients, $m:1..\alpha^*$}}
\end{center}
\STATE Initialize the global model
\STATE Compute $\alpha^*$ based on $r_1$, $r_2$, and $N$ using equation (\ref{alphastar})
\STATE Set $A_b$, best test accuracy = 0
\FOR {$m$ = 1 to $\alpha^*$}
\STATE Test client $\mathcal{C_{M}}$ and record $A_m$, its test accuracy
\IF {$A_m > A_b$}
\STATE Set $A_b = A_m$
\ENDIF
\STATE Reject candidate client $C_m$
\ENDFOR

\STATEx
\begin{center}
\textbf{\textit{// Find $R$ best candidate clients}}
\end{center}
\STATE Set $S_b$, set of best candidate clients = []
\STATE Set $N_b$, number of best candidate clients found = 0
\FOR {$m=\alpha^*+1$ to $N$}
\IF {$N_b$ = $R$}
\STATE Reject candidate client $C_m$
\ELSIF {$(N-m) \leq (R-N_b)$}
\STATE Accept candidate client $C_m$ and add it to $S_b$
\STATE Increment $N_b$ by 1
\ELSE
\STATE Test client $C_m$ and record $A_m$, its test accuracy
\IF {$A_m > A_b$}
\STATE Accept candidate client $C_m$ and add it to $S_b$
\STATE Increment $N_b$ by 1
\ENDIF
\ENDIF
\ENDFOR

\STATEx
\begin{center}
\textbf{\textit{// Start training}}
\end{center}
\FOR {$k=1$ to $K$}
\STATE Send global model to all candidate clients in $S_b$
\STATE Candidate clients train global model on local dataset for $E$ epochs
\STATE Server average aggregated model parameters from candidate
\STATEx \quad \quad clients in $S_b$
\ENDFOR
\ENDFOR
\end{algorithmic}
\end{algorithm}

In the \textit{second stage} (Algorithm \ref{alg_proposed}, lines 12 through 27), whenever a candidate client becomes available, it gets tested in the same way explained in the first section. Next, the server accepts (i.e., selects) the candidate client only if its test accuracy is greater than the best test accuracy found in the first section. Nonetheless, if the number of available candidate clients is less than the number of required candidate clients (i.e., $R$) then the server has no choice but to select those remaining candidate clients. In the worst-case scenario, the candidate client with the best test accuracy is met during the first section. Consequently, all candidate clients met early in the second stage are rejected for having a test accuracy less than that of the best test accuracy found in the first section. As a result, the server is forced to accept all candidate clients that are met at the end of the second section. In fact, in the worst-case scenario, the proposed heuristic behaves similarly to the random algorithm explained in Section \ref{result_discussion}. 

In the \textit{third stage} (Algorithm \ref{alg_proposed}, lines 28 through 32), once the best candidate clients are identified, the global model is trained using the selected candidate clients for $K$ communication rounds as described in Section \ref{introduction_sec}.

\subsection{Illustrative Example}

To understand the proposed algorithm in more depth, we present an example where we observe one run cycle (when $\delta$ is 1) of the proposed algorithm overtime (see Figure \ref{fig:HeuristicExample}). Assume a total of 10 candidate clients becoming available over time during the observed period (i.e. $N=10$). We refer to candidate $i$ as $\mathcal{C}_i$. Additionally, we set the budget to 2 candidate clients (i.e. $R$ = 2), which means that we want to select the best 2 candidate clients for training the global model. Moreover, we set $E$, the number of epochs, to 3 and set $K$, the number of communication rounds between the server, and selected candidate clients for training the global model to 20. Also, the value of $\alpha^*$ is computed based on equation (\ref{alphastar}) in section \ref{optimal_alpha} (assuming $r_1$ is 1 and $r_2$ is 2) and its value is 2.

\begin{figure}[!h]
\centering
\includegraphics[width=0.45\textwidth,trim={-3 0 0 -3},clip]{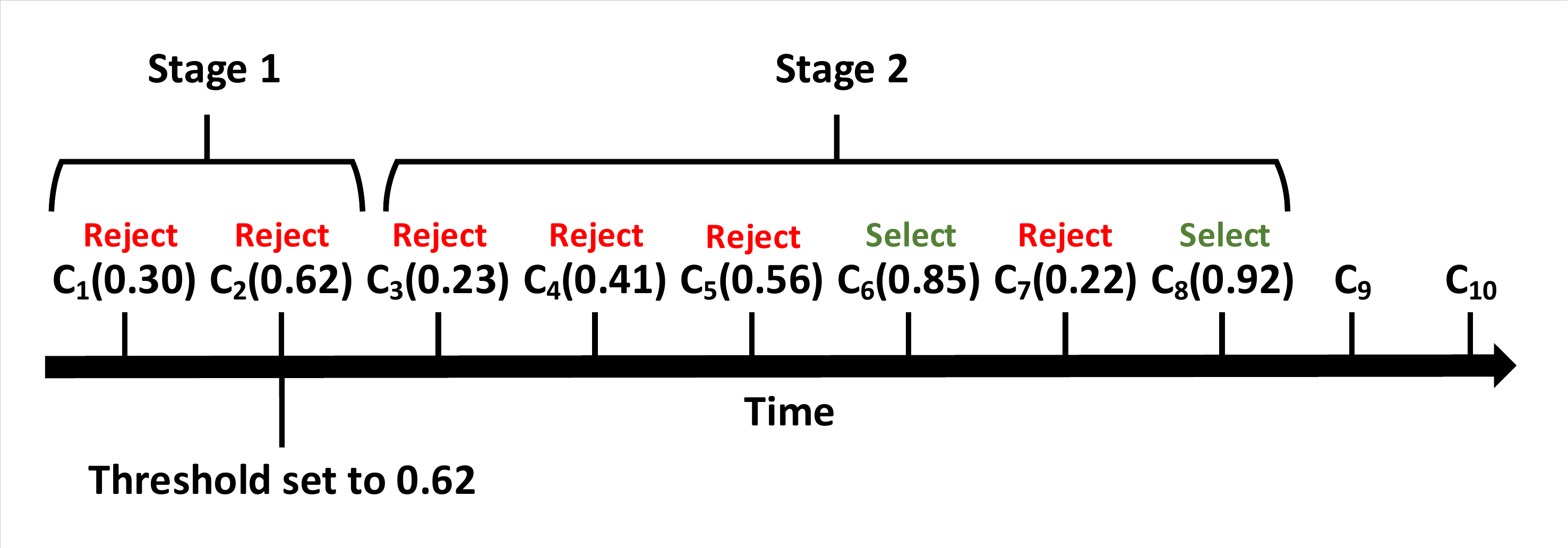}
\caption{An illustrative example of the proposed algorithm.}
\label{fig:HeuristicExample}
\end{figure}

When a candidate client becomes available then (1) the proposed algorithm initiates the global model's parameters then sends them to the candidate client, (2) The candidate client trains the local model for $E$ epochs using the received parameters from the server on the local dataset, (3) the candidate client sends the updated parameters to the server, (4) the server evaluate the accuracy of the candidate client by testing the global model using the updated parameters on the test dataset. Then, the proposed algorithm must make an irrevocable decision on whether to use this client or not based on its evaluated test accuracy.

The proposed algorithm runs in three stages. In the first stage, the proposed algorithm communicates with the first $\alpha^*$ (i.e. 2) candidate clients and evaluate their test accuracy to determine the best test accuracy, which is used as a selection threshold with the rest of candidate clients that become available later. Thus, when $\mathcal{C}_1$ becomes available, the proposed algorithm communicates with $\mathcal{C}_1$ then evaluates its test accuracy and finds it 0.30. The proposed algorithm sets its selection threshold to 0.30 and rejects $\mathcal{C}_1$. Next, $\mathcal{C}_2$ becomes available and the proposed algorithm communicates with $\mathcal{C}_2$ then evaluates its test accuracy and finds it 0.62. The proposed algorithm updates its selection threshold to 0.62 as illustrated in Fig. \ref{fig:HeuristicExample} where $\mathcal{C}_i(x)$ represents candidate client $i$ with evaluated test accuracy $x$ (test accuracy is a number between 0 and 1, where 0 means the trained model fails to identify all test samples while 1 means the trained model identifies all test samples successfully).

In the second stage, the proposed algorithm will continue to communicate with any candidate client that becomes available and evaluate its test accuracy to decide on the selection of this candidate client. This process continues as shown in Fig. \ref{fig:HeuristicExample} until the proposed algorithm selects 2 candidate clients and as follows:

\begin{itemize}
    \item $\mathcal{C}_3$ becomes available and its test accuracy is 0.23 and thus gets rejected.
    \item $\mathcal{C}_4$ becomes available and its test accuracy is 0.41 and thus gets rejected.
    \item $\mathcal{C}_5$ becomes available and its test accuracy is 0.56 and thus gets rejected.
    \item $\mathcal{C}_6$ becomes available and its test accuracy is 0.85 and thus gets selected.
    \item $\mathcal{C}_7$ becomes available and its test accuracy is 0.2 and thus gets rejected.
    \item $\mathcal{C}_8$ becomes available and its test accuracy is 0.92 and thus gets selected.
    \item The server is not going to communicate with $\mathcal{C}_9$ and $\mathcal{C}_{10}$ once they are available since the proposed algorithm has already selected two candidate clients.
\end{itemize}

In the third stage, the proposed algorithm trains the global model using $\mathcal{C}_6$ and $\mathcal{C}_8$ with $K$ communication rounds but without initiating the global model in every round. A best-case scenario is presented in this example, but a worst-case scenario can occur if the test accuracy of $\mathcal{C}_2$ is evaluated and found as 0.93. In this case, the proposed algorithm rejects both $\mathcal{C}_6$ and $\mathcal{C}_8$. Eventually, the proposed algorithm will have to communicate with the last two clients ($\mathcal{C}_{7}$ and $\mathcal{C}_{10}$) and selects both.

\section{Performance Analysis}
\label{performance_analysis}

The performance of the proposed algorithm explained in the previous section depends vitally on the optimal value of $\alpha$, which is $\alpha^*$. In this section, we derive an equation for computing the value of $\alpha^*$ and prove its validity. This equation is plugged in the first stage of the proposed algorithm as mentioned in section \ref{proposed_algorithm}. Finally, we analytically analyze the performance of the proposed algorithm in worst-case scenario.

\subsection{Optimal Value for $\alpha$}
\label{optimal_alpha}

By assuming (i) $M$ and $i_{m}$ positions are not known in advance, (ii) the candidates can arrive in any order, and (iii) $N,\alpha>>R$, we aim to find the optimum value $\alpha^{*}$, depending on both, allowing to maximize $\mathcal{P}^{(r_{1},r_{2})}$.

\begin{lemma} The following summation 
\begin{equation}
\mathcal{K}(R,\alpha)=\sum_{i_{R}=\alpha+1}^{N-R+1}\frac{1}{i_{R}-1}\sum_{i_{R-1}=i_{R}+1}^{N-R+2}\frac{1}{i_{R-1}-1}..\sum_{i_{1}=i_{2}+1}^{N}\frac{1}{i_{1}-1},
\end{equation}
can be tightly approximated by 
\begin{equation}
\mathcal{K}(R,\alpha)\approx\frac{\left(\log\frac{N}{\alpha}\right)^{R}}{R!}.\label{eq:ind}
\end{equation}

\end{lemma}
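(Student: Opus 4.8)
The plan is to prove the estimate by induction on $R$, at each level replacing a sum of the shape $\sum \frac{1}{i-1}\,g(i)$ by the integral $\int \frac{g(x)}{x}\,dx$ and keeping track that the accumulated error stays negligible under the standing hypothesis $N,\alpha \gg R$.

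\textbf{Base case.} For $R=1$ the claim is $\mathcal{K}(1,\alpha)=\sum_{i_1=\alpha+1}^{N}\frac{1}{i_1-1}=H_{N-1}-H_{\alpha-1}$, and the classical harmonic-number asymptotics $H_n=\log n+\gamma+O(1/n)$ give $\mathcal{K}(1,\alpha)=\log(N/\alpha)+O(1/\alpha)$, i.e.\ $\frac{(\log(N/\alpha))^1}{1!}$ up to a relative error of order $\frac{1}{\alpha\log(N/\alpha)}$.

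\textbf{Inductive step.} Assume the inner block of $R-1$ nested summations, started from a generic index $i_R+1$ and running to $N$, is already known to equal $\approx \frac{(\log(N/i_R))^{R-1}}{(R-1)!}$ (the upper limits $N-R+2,\dots,N$ differ from $N$ by at most $R-1\ll N$, so they only perturb this by a lower-order amount). Then
\begin{equation*}
\mathcal{K}(R,\alpha)\approx\sum_{i_R=\alpha+1}^{N-R+1}\frac{1}{i_R-1}\cdot\frac{\bigl(\log(N/i_R)\bigr)^{R-1}}{(R-1)!}.
\end{equation*}
Since $x\mapsto \frac{1}{x}\bigl(\log(N/x)\bigr)^{R-1}$ is smooth and monotone on $[\alpha,N]$, an integral-comparison (Euler--Maclaurin) bound lets me replace this sum by $\frac{1}{(R-1)!}\int_{\alpha}^{N}\frac{(\log(N/x))^{R-1}}{x}\,dx$, and the substitution $t=\log(N/x)$, $dt=-dx/x$ turns it into $\frac{1}{(R-1)!}\int_{0}^{\log(N/\alpha)}t^{R-1}\,dt=\frac{(\log(N/\alpha))^{R}}{R!}$, closing the induction. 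Equivalently, one may perform all $R$ sum-to-integral replacements simultaneously and recognize $\int_{\alpha}^{N}\frac{dx_R}{x_R}\int_{x_R}^{N}\frac{dx_{R-1}}{x_{R-1}}\cdots\int_{x_2}^{N}\frac{dx_1}{x_1}$, after $u_j=\log x_j$, as the volume $\frac{(\log N-\log\alpha)^R}{R!}$ of the simplex $\{\log\alpha\le u_R\le u_{R-1}\le\cdots\le u_1\le\log N\}$.

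\textbf{Main obstacle.} The delicate point is not the leading term but the error bookkeeping: each sum-to-integral passage contributes an additive error, and these errors get multiplied and summed across the $R$ nested levels; moreover the discrepancies between $i_m-1$ and $i_m$, and between the upper cutoffs $N-R+m$ and $N$, must be shown to produce only lower-order corrections. I expect the relative error to come out of order $O(R^2/\alpha)$ (or $O(R/\alpha+R/\log(N/\alpha))$), which is exactly where $N,\alpha\gg R$ enters and which justifies the word \emph{tightly}. Making this nested error propagation rigorous — rather than heuristically swapping each $\sum$ for an $\int$ — is the only real work in the argument.
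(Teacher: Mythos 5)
Your proposal is correct and follows essentially the same route as the paper: induction on $R$, with the base case $\sum_{i_1=\alpha+1}^{N}\frac{1}{i_1-1}\approx\int_{\alpha}^{N}\frac{dt}{t}=\log\frac{N}{\alpha}$ and the inductive step writing $\mathcal{K}(R,\alpha)=\sum_{i_R=\alpha+1}^{N-R+1}\frac{1}{i_R-1}\mathcal{K}(R-1,i_R)$, replacing the sum by the integral $\frac{1}{(R-1)!}\int_{\alpha}^{N}\frac{(\log(N/t))^{R-1}}{t}\,dt=\frac{(\log(N/\alpha))^{R}}{R!}$ under the assumption $N,\alpha\gg R$. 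Your added remarks on the simplex-volume interpretation and on tracking the accumulated sum-to-integral errors go slightly beyond what the paper writes, but the argument itself is the same.
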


\begin{proof} Let us proceed by induction. one can ascertain that for $R=1$, the summation $\sum_{i_{1}=\alpha+1}^{N}\frac{1}{i_{1}-1}$
can be approximated by the $\int_{\alpha}^{N}\frac{dt}{t}=\log\frac{N}{\alpha}$,
confirming \eqref{eq:ind}.

Let us assume that \eqref{eq:ind} holds for $R-1$. One obtains 

\begin{align}
\mathcal{K}(R,\alpha)= & \sum_{i_{R}=\alpha+1}^{N-R+1}\frac{1}{i_{R}-1}\mathcal{K}(R-1,i_{R})\nonumber \\
\approx & \frac{1}{\left(R-1\right)!}\sum_{i_{R}=\alpha+1}^{N-R+1}\frac{\left(\log\frac{N}{i_{R}}\right)^{R-1}}{i_{R}-1}\nonumber \\
\approx & \frac{1}{\left(R-1\right)!}\int_{\alpha}^{N-R+1}\frac{\left(\log\frac{N}{t}\right)^{R-1}}{t}dt.
\end{align}

Finally, taking into account that $R<<N$ (i.e., $N-R+1\approx N$), it follows that

\begin{align}
\mathcal{K}(R,\alpha) & \approx\frac{1}{R!}\left[-\left(\log\frac{N}{t}\right)^{R}\right]_{\alpha}^{N}\\
 & \approx\frac{1}{R!}\left(\log\frac{N}{\alpha}\right)^{R},
\end{align}
which concludes the proof.
\end{proof}

\begin{proposition} For all positive numbers $r_{1},r_{2}<<\alpha,N$, the approximation

\begin{align}
\mathcal{P}^{(r_{1},r_{2})} & \approx\frac{\alpha}{N}\sum_{R=r_{1}}^{r_{2}}\frac{1}{R!}\left(\log\frac{N}{\alpha}\right)^{R},\label{Prob-1}
\end{align}
holds, and the optimum value maximizing such probability is 

\begin{equation}
\alpha^{*}=N\exp\left(-\left(\frac{r_{2}!}{\left(r_{1}-1\right)!}\right)^{\frac{1}{r_{2}-r_{1}+1}}\right).
\label{alphastar}
\end{equation}

\end{proposition}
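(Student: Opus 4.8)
The plan is to prove the approximation \eqref{Prob-1} first and then to maximize its right-hand side over $\alpha$ by one-variable calculus. For \eqref{Prob-1}, I would start from $\mathcal{P}^{(r_{1},r_{2})}=\sum_{R=r_{1}}^{r_{2}}\Pr(\mathcal{E}_{R})$ and evaluate $\Pr(\mathcal{E}_{R})$ through the disjoint decomposition of $\mathcal{E}_{R}$ recorded in Table~\ref{table_notations}: a union, over ordered position tuples $\alpha+1\le i_{R}<i_{R-1}<\dots<i_{1}\le N$, of the events $\bigcap_{\ell=1}^{R}\bigl(\mathcal{A}^{(i_{\ell})}\cap\mathcal{B}_{R}^{(\ell,i_{\ell})}\bigr)$. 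Since arrivals occur in uniformly random order and these events are pairwise disjoint, it suffices to compute the probability of one such term and sum. I expect that probability to factor as $\frac{\alpha}{N}\prod_{\ell=1}^{R}\frac{1}{i_{\ell}-1}$, where the single factor $\frac{\alpha}{N}$ encodes that the reference client $\mathcal{C}_{M}$ must be the best one observed up to the relevant moment, while each factor $\frac{1}{i_{\ell}-1}$ is the probability that the client arriving at position $i_{\ell}$ beats all of its predecessors apart from those already accepted. Summing over all admissible tuples then gives $\Pr(\mathcal{E}_{R})=\frac{\alpha}{N}\,\mathcal{K}(R,\alpha)$, and plugging in the Lemma's estimate \eqref{eq:ind} yields $\Pr(\mathcal{E}_{R})\approx\frac{\alpha}{N}\cdot\frac{1}{R!}\bigl(\log\frac{N}{\alpha}\bigr)^{R}$; summing over $R$ from $r_{1}$ to $r_{2}$ produces \eqref{Prob-1}.

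For the maximization, I would regard the right-hand side of \eqref{Prob-1} as a function of $\alpha$ and substitute $x=\log\frac{N}{\alpha}$, so that $\frac{\alpha}{N}=e^{-x}$ and the quantity to be maximized becomes $g(x)=e^{-x}\sum_{R=r_{1}}^{r_{2}}\frac{x^{R}}{R!}$, with $\alpha$ increasing if and only if $x$ decreasing. Differentiating and using the telescoping identity $\sum_{R=r_{1}}^{r_{2}}\frac{x^{R-1}}{(R-1)!}-\sum_{R=r_{1}}^{r_{2}}\frac{x^{R}}{R!}=\frac{x^{r_{1}-1}}{(r_{1}-1)!}-\frac{x^{r_{2}}}{r_{2}!}$ gives $g'(x)=e^{-x}\bigl(\frac{x^{r_{1}-1}}{(r_{1}-1)!}-\frac{x^{r_{2}}}{r_{2}!}\bigr)$. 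The unique positive root of $g'$ satisfies $x^{r_{2}-r_{1}+1}=\frac{r_{2}!}{(r_{1}-1)!}$, and since $g'$ is positive for small $x$ (the lower-degree term dominating) and negative for large $x$, this root is the maximizer. Substituting back through $\alpha^{*}=Ne^{-x}$ yields precisely \eqref{alphastar}; as a sanity check, $r_{1}=r_{2}=1$ gives $x=1$, i.e.\ $\alpha^{*}=N/e$, recovering the classical secretary-problem threshold.

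The main obstacle is the first part, namely showing rigorously that each term in the decomposition of $\mathcal{E}_{R}$ has probability exactly $\frac{\alpha}{N}\prod_{\ell=1}^{R}\frac{1}{i_{\ell}-1}$. This requires handling carefully (i) what ``being selected'' entails, given that the acceptance threshold is frozen after the first $\alpha$ clients; (ii) the nesting constraints $\alpha+1\le i_{R}<\dots<i_{1}\le N$; and (iii) the $N,\alpha\gg R$ approximations — replacing $N-R+j$ by $N$ and $i_{\ell}$ by $i_{\ell}-1$ — that let the nested discrete sum be passed to the iterated integral in the Lemma. Once \eqref{Prob-1} is in hand, the optimization in the second part is routine.
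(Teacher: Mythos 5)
Your proposal takes essentially the same route as the paper's proof: the same disjoint decomposition of $\mathcal{E}_{R}$ over ordered position tuples, the same product formula for each term (the paper derives the exact constant $\frac{\alpha}{N-R+1}$ via Bayes's rule in \eqref{secondterm-1}--\eqref{eq1-1-1} and only then sets $N-R+1\approx N$, where you write $\frac{\alpha}{N}$ directly), the same appeal to the Lemma to obtain \eqref{Prob-1}, and the same stationarity condition $x^{r_{2}-r_{1}+1}=\frac{r_{2}!}{(r_{1}-1)!}$ after an equivalent change of variables. The step you flag as the main obstacle is precisely the one the paper settles with those two conditional probabilities, and your monotonicity argument for the maximizer (in place of the paper's second-derivative check) is sound.
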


\begin{proof}

Given that the indices of the selected candidates are sorted in increasing order of candidates' accuracies, $\mathcal{E}_{R}$ can be broken into $R$ exclusives events as follows

\begin{itemize}
\item Candidate client $M$ is the best one in $[1,i_{R}-1]$ \textbf{and} 

\item Candidate clients $i_{m}$ are the best ones in $[1,i_{m-1}-1]$,\textbf{
$2\leq m\leq R$} \textbf{and} 

\item Candidate client $i_{1}$ is best one in $[1,N]$. 
\end{itemize}

Consequently,

\begin{multline}
\Pr\left(\mathcal{E}_{R}\right)=\sum_{i_{R}=\alpha+1}^{N-R+1}\sum_{i_{R-1}=i_{R}+1}^{N-R+2}.. \\
\sum_{i_{1}=i_{2}+1}^{N}\Pr\left(\underset{\mathcal{D}_{R}}{\underbrace{\stackrel[\ell=1]{R}{\cap}\left\{ \mathcal{A}^{\left(i_{\ell}\right)}\cap\mathcal{B}_{R}^{\left(\ell,i_{\ell}\right)}\right\} }}\right).
\label{eq0-1-1}
\end{multline}

With the aid of the Bayes's rule, $\mathcal{D}_{R}$ can be rewritten as

\begin{multline}
\Pr\left(\mathcal{D}_{R}\right) =\\
\Pr\left(\left.\mathcal{A}^{\left(i_{R}\right)}\right|\mathcal{B}_{R}^{\left(R,i_{R}\right)}\cap\mathcal{D}_{R-1}\right)\Pr\left(\left.\mathcal{B}_{R}^{\left(R,i_{R}\right)}\right|\mathcal{D}_{R-1}\right).
\label{entireterm}
\end{multline}

The probability to select the $R$th best one among $\left[1..N\right]\setminus\{i_{1},i_{2},..,i_{R-1}\}$ is 
\begin{align}
\Pr\left(\left.\mathcal{B}_{R}^{\left(R,i_{R}\right)}\right|\mathcal{D}_{R-1}\right) & =\frac{1}{N-R+1},\label{secondterm-1}
\end{align}

\begin{flushleft}
with the conditional probability in \eqref{entireterm} can be evaluated as 
\end{flushleft}

\begin{equation}
\Pr\left(\left.\mathcal{A}^{\left(i_{R}\right)}\right|\mathcal{B}_{R}^{\left(R,i_{R}\right)}\cap\mathcal{D}_{R-1}\right)=\frac{\alpha}{i_{R}-1}\stackrel[\ell=1]{R-1}{\prod}\frac{1}{i_{\ell}-1}.
\label{eq1-1-1}
\end{equation}

Substituting \eqref{eq1-1-1}, \eqref{secondterm-1}, and \eqref{entireterm} into \eqref{eq0-1-1}, one obtains

\begin{align}
\Pr\left(\mathcal{E}_{R}\right) & =\frac{\alpha}{N-R+1}\mathcal{K}(R,\alpha).
\label{eq:conditional generalized}
\end{align}

Leveraging \textbf{Lemma 1} and noting that $N-R+1\approx N$, \eqref{Prob-1} is obtained. Now, defining $x=\alpha/N$ (i.e., $0\leq x\leq1$),
the two first derivatives of $\mathcal{P}^{(r_{1},r_{2})}$ with respect
to $x$ can be expressed as

\begin{equation}
\frac{\partial\mathcal{P}^{(r_{1},r_{2})}}{\partial x} =\frac{\left(-\log x\right)^{r_{2}}}{r_{2}!}-\frac{\left(-\log x\right)^{r_{1}-1}}{\left(r_{1}-1\right)!},
\end{equation}

\begin{equation}
\frac{\partial^{2}P^{(r_{1},r_{2})}}{\partial x^{2}} =-\frac{1}{x}\left[\frac{\left(-\log x\right)^{r_{2}-1}}{\left(r_{2}-1\right)!}-\frac{\left(-\log x\right)^{r_{1}-2}}{\left(r_{1}-2\right)!}\right].
\end{equation}

Thus, by solving $\frac{\partial\mathcal{P}^{(r_{1},r_{2})}}{\partial x}=0$ and setting $\alpha^{*}=Nx^{*}$, we get \eqref{alphastar}. Moreover, it can be easily checked that the second derivative evaluated at $x^{*}$

\begin{align}
&\frac{\partial^{2}P^{(r_{1},r_{2})}}{\partial x^{2}} = \nonumber \\
&-\frac{\left(-\log x\right)^{r_{1}-2}}{x^{*}\left(r_{1}-2\right)!} \left[\frac{\left(r_{1}-2\right)!}{\left(r_{2}-1\right)!}\underset{=\frac{r_{2}!}{\left(r_{1}-1\right)!}}{\underbrace{\left(-\log x^{*}\right)^{r_{2}-r_{1}+1}}}-1\right] \nonumber \\ 
& =-\left(r_{2}-r_{1}+1\right)\frac{\left(-\log x\right)^{r_{1}-1}}{x^{*}\left(r_{1}-1\right)!},
\end{align}
is negative as $r_{2}>r_{1}$ and $x^{*}\leq1$, which completes the proof.
\end{proof}

Table \ref{table_choosing_alpha} summarizes some values of the optimal number $\alpha^{*}$
along with the aforementioned maximum probability for various values of $r_{1}$ and $r_{2},$ when $N=1000$. Note that the probability \eqref{Prob-1} is an increasing function on $r_{2}$, while its maximum value is not monotone as it depends also on $r_{1}$ as summarized in Table I. It can be seen also that:
\begin{itemize}
 \item The smaller $r_{1}$ is, the greater the optimal value ($\alpha^{*}=Nx^{*}$).
 \item For a fixed $r_{1}$, the larger $r_{2}$ is, the smaller $\alpha^{*}$.
\end{itemize}

Fig. \ref{fig:alpha_prob} shows that the probability of selecting the best $R$ clients is higher when the value of $\alpha$ is small.

\begin{figure}[htbp]
\centering
\includegraphics[width=0.45\textwidth]{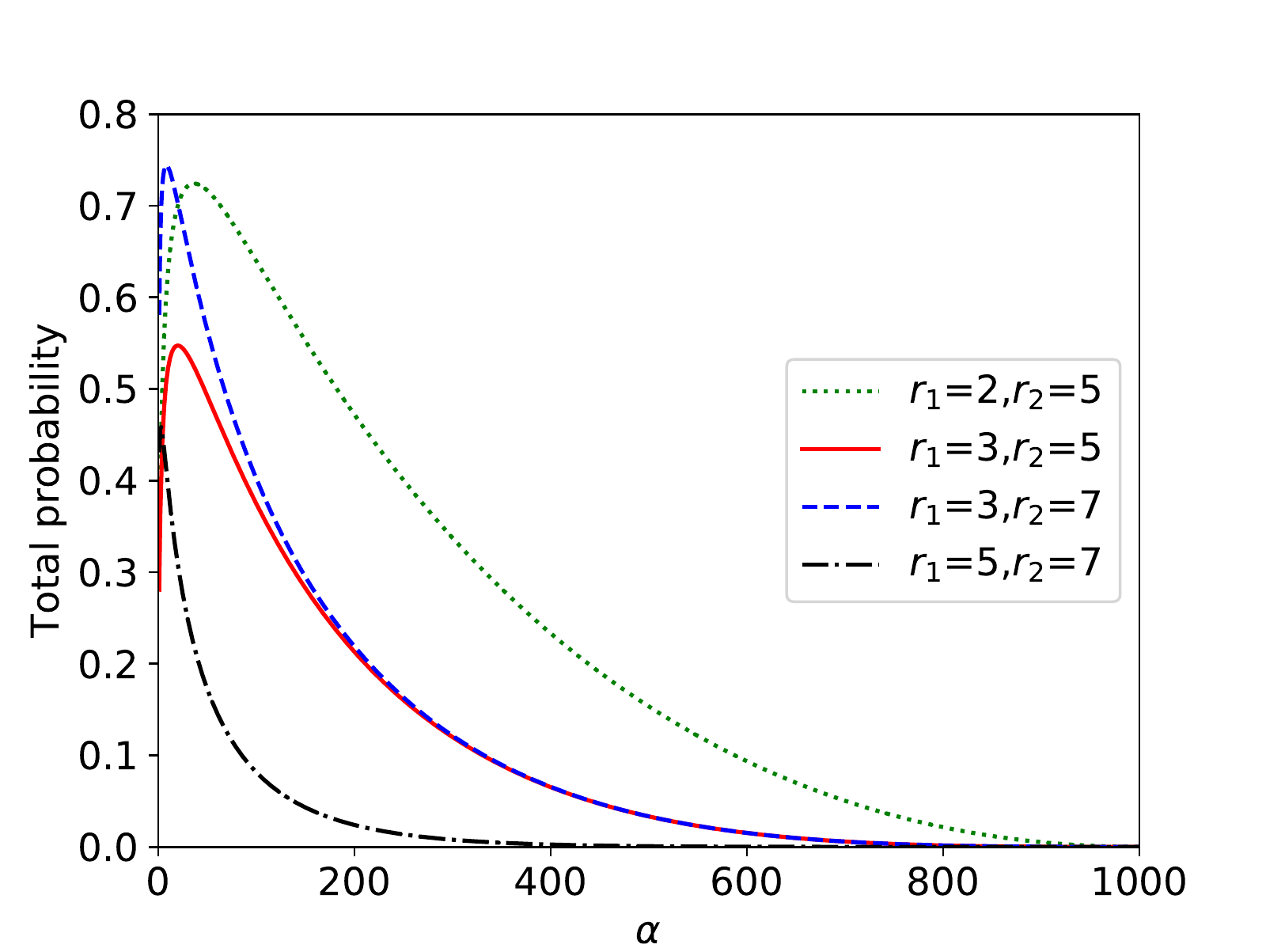}
\caption{Effects of $\alpha$ on the probability of selecting the best clients.}
\label{fig:alpha_prob}
\end{figure}

\begin{table}[th]
\caption{Choosing $\alpha^{*}$ that maximizes the probability to select \textbf{$R$}
best candidates such that $r_{1}\protect\leq R\protect\leq r_{2}$ and $N=1000$.}
\label{table_choosing_alpha}
\centering %
\begin{tabular}{c|c|c|c|c}
\hline 
\thead{$r_{1}$} & \thead{$r_{2}$} & \thead{$\alpha^{*}$} & \thead{Percentage (\%) $x^{*}=\frac{\alpha^{*}}{N}$} & \thead{$\mathcal{P}_{\max}^{(r_{1},r_{2})}$}
\tabularnewline
\hline 
$2$ & $2$ & $135.3353$ & $13.53$ & $0.2707$\tabularnewline
\hline 
$2$ & $3$ & $49.7871$ & $4.97$ & $0.4481$\tabularnewline
\hline 
$2$ & $4$ & $0.3355$ & $0.03$ & $0.0966$\tabularnewline
\hline 
$3$ & $3$ & $49.7871$ & $4.97$ & $0.2240$\tabularnewline
\hline 
$3$ & $4$ & $2.4788$ & $0.24$ & $0.2231$\tabularnewline
\hline 
\end{tabular}
\end{table}

\subsection{Worst-Case Analysis (Competitive Ratio Analysis)}

The worst-case scenario is encountered when the proposed heuristic does not find candidates that exceed $\mathcal{C_M}$ from index $\alpha^{*}$ until $N$. The competitive ratio in the worst-case scenario is computed over all possible input sequences as the maximum ratio of the gain of the online algorithm and the optimal offline algorithm \cite{competitive_ratio_2013}.

\begin{proposition}
The heuristic's worst-case performance has a competitive ratio of $O(1)$ when $R$ is proportional to $N$.
\end{proposition}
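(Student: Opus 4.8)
The plan is to identify the structure of the worst-case input explicitly, evaluate the online and offline gains on it, and then read off the ratio under the hypothesis $R=\Theta(N)$. Throughout I take the \emph{gain} of a selection to be the sum of the ranks of the selected clients, in line with the secretary-LP objective $\frac1N\sum_i iP_i$ stated earlier: rank the $N$ clients $1$ (lowest accuracy) through $N$ (highest), so that the optimal offline algorithm, which sees all clients in advance, always selects the $R$ top-ranked clients and attains $\mathrm{OPT}=N+(N-1)+\dots+(N-R+1)=RN-\tfrac{R(R-1)}{2}$.

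First I would pin down the worst-case arrival sequence, which the text preceding the proposition already points to. The damaging event is that the globally best client arrives among the first $\alpha^{*}$ positions, so $A_b$ equals the maximum accuracy over all $N$ clients; then the accuracy test in the second stage (Algorithm \ref{alg_proposed}, line~21) never succeeds, $N_b$ stays $0$, and the fallback condition $(N-m)\le(R-N_b)$ on line~17 forces the heuristic to accept exactly the clients arriving in positions $N-R,\dots,N-1$. A worst-case adversary additionally orders the arrivals so that those $R$ forced positions carry the $R$ lowest-ranked clients, giving $\mathrm{ALG}=1+2+\dots+R=\tfrac{R(R+1)}{2}$. I would then argue that no other sequence does worse: if $A_b$ is not the global maximum, the second stage may accept, before the forced phase, some client beating $A_b$, which cannot lower the online gain; and among the orders exhibiting the forced-acceptance pattern, moving higher ranks toward the front and lower ranks into the forced tail simultaneously raises $\mathrm{OPT}$ and lowers $\mathrm{ALG}$, so the extremal configuration is exactly the one above.

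With these two quantities in hand the computation is immediate:
\[
\frac{\mathrm{OPT}}{\mathrm{ALG}}=\frac{RN-\tfrac{R(R-1)}{2}}{\tfrac{R(R+1)}{2}}=\frac{2N-R+1}{R+1}.
\]
Writing $R=cN$ for a fixed constant $c\in(0,1)$ — which is precisely the hypothesis that $R$ is proportional to $N$ — the right-hand side equals $\dfrac{(2-c)N+1}{cN+1}$, which converges to the constant $\dfrac{2-c}{c}$ and is bounded by $2/c$ for all $N$; hence the competitive ratio is $O(1)$. I would close with the remark that the hypothesis is essential and the estimate tight in spirit: for constant $R$, or any $R=o(N)$, the same formula gives $\Theta(N/R)=\omega(1)$, so the heuristic fails to be $O(1)$-competitive outside the proportional regime.

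The step I expect to be the genuine obstacle is the second one: rigorously certifying that the sequence described above is \emph{the} worst case and not merely \emph{a} bad one. This requires the monotonicity/exchange arguments — that not forcing $A_b$ to be the global maximum never increases the ratio, and that any rank permutation away from the ``best-first, worst-in-the-tail'' layout never increases it either — carried out carefully over the finite but combinatorially large set of admissible arrival orders. The gain evaluations and the asymptotics of the last step are routine once that extremality is in place.
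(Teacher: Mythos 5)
Your proof is correct, but it reaches the conclusion by a genuinely different route than the paper. The paper identifies the same worst-case event (the globally best client arrives among the first $\alpha^{*}$, so the threshold $A_b$ is never exceeded and the heuristic is reduced to forced acceptances), but then argues probabilistically over the random arrival order: each candidate in positions $\alpha^{*}+1,\dots,N$ ends up selected with probability $\tfrac{R}{N-\alpha^{*}}$, so the ratio $\mathrm{ALG}/\mathrm{OPT}$ is taken to be $\tfrac{R}{N-\alpha^{*}}$, a constant once $R=\Theta(N)$ (recall $\alpha^{*}=Nx^{*}$ with $x^{*}<1$). You instead commit to a concrete gain function (rank sum, from the secretary LP), build an explicit adversarial order, and compute $\mathrm{OPT}/\mathrm{ALG}=\tfrac{2N-R+1}{R+1}\to\tfrac{2-c}{c}$. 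Two observations. First, the extremality step you flag as the main obstacle is actually immediate and needs no exchange argument: the fallback condition $(N-m)\le(R-N_b)$ guarantees the heuristic outputs exactly $R$ clients on every input, any $R$ distinct ranks sum to at least $1+2+\dots+R$, and $\mathrm{OPT}$ is order-independent, so $\mathrm{ALG}\ge\tfrac{R(R+1)}{2}$ and $\mathrm{OPT}=\tfrac{R(2N-R+1)}{2}$ hold uniformly. Second, the two proofs quantify over different things and your choice of objective is load-bearing: yours is a true adversarial-order bound but only for the rank-sum gain --- if gain were the sum of accuracies, an adversary could push the forced tail's values toward zero and no $O(1)$ bound would hold --- whereas the paper's selection-probability argument yields an expected (random-order) guarantee that is agnostic to the actual nonnegative values. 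Since the paper never pins down the gain function, both readings are defensible; within its model yours is the more rigorous, and your closing remark that $R=o(N)$ gives a $\Theta(N/R)$ ratio correctly shows the proportionality hypothesis is necessary.
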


\begin{proof}
Let ALG be the proposed heuristic and OPT be the optimal algorithm. 

The worst-case happens when the highest element appears before index $\alpha^{*}$. In that case, the proposed algorithm randomly selects candidate clients from index $\alpha^{*}+1$ until $N$. A candidate client within this range of indices is selected with probability$\frac{R}{N-\alpha^{*}}$. Consequently, the following proof is concluded as follows:
\\
\newline
\begin{align*}
Com_r = \frac{ALG}{OPT}
= \frac{R}{N-\alpha^{*}}\\
\end{align*}
Thus, $Com_r$, the competitive ratio, becomes $\mathcal{O}(1)$ when $R$ is proportional to $N$.
\end{proof}

\section{Experimental Settings}
\label{experimental_results}

In this section, we describe the application proposed in this paper in detail first. Next, we describe the dataset used in the simulation and describe the dataset preparation phases used to transform the raw dataset into $N$ candidate clients' datasets. Finally, we discuss conduced experiments.

\begin{figure}[!h]
\centering
\includegraphics[width=0.44\textwidth]{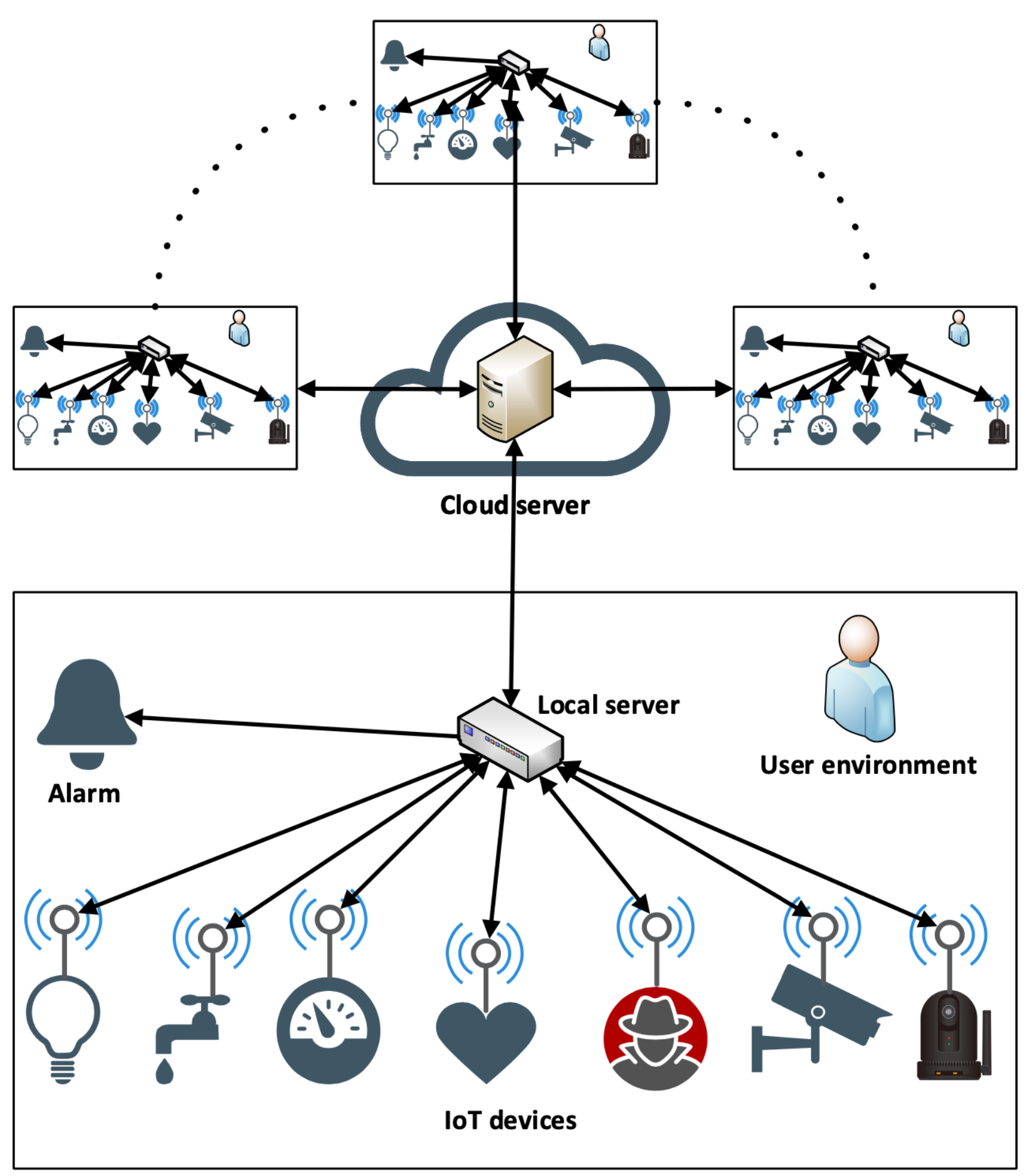}
\caption{An illustration of the clients alarm application. The cloud server running the proposed algorithm communicates with the local servers of the best subscribed clients to train the global model.}
\label{fig:application}
\end{figure}

\subsection{Use Case: IoT Device type Classification}
\label{problem}

\begin{figure*}[!h]
\centering
\includegraphics[width=1\textwidth]{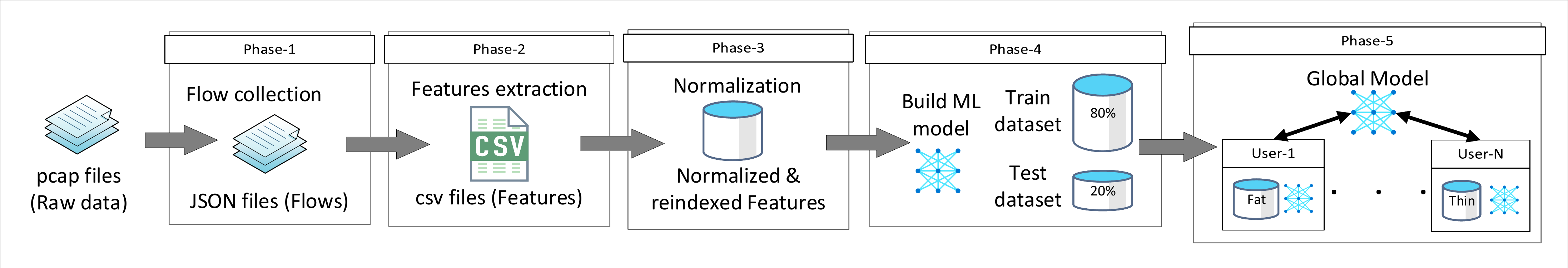}
\caption{Dataset preprocessing phases (through which the raw dataset is transformed to the $N$ candidate clients' datasets).}
\label{fig:SimulationPhases}
\end{figure*}

IoT devices perform specific tasks, which makes their network behavior predictable \cite{rec_analysis_18}. There are plenty of studies on IoT device type classification or fingerprinting in the literature \cite{rec_analysis_18, ecosystems_18, forged_19, home_iot_18, feature_ranking_19, telescopes_18, sentinel_17, smart_cities_17, profilIoT_17, tmc_19}. Those studies concentrate on identifying IoT devices type for different reasons including security, access control, provisioning, resource allocation, and management \cite{ecosystems_18}. Actually, most of those studies concentrate on security in response to recent incidents \cite{iot_optimalLoadDDOS_2020}, \cite{iot_intrusion_2020}. In one incident, thousands of IoT devices including surveillance cameras are used for Distributed Denial of Service (DDoS) attack \cite{forged_19}. Therefore, we propose a client alarm application based on IoT device type classification in FL settings to identify unauthorized IoT devices. The IoT device type classification is inspired by the work in \cite{tmc_19}. We aim to use the proposed application as a use-case to test the performance of the proposed heuristic.

The proposed application consists of $N$ candidate clients, a main server in the cloud, and an alarm mechanism. Each client's environment has several IoT devices, a local machine (i.e., the local server), and an alarm device as shown in Fig. \ref{fig:application}. The alarm can be a physical device or software that delivers email, text messages, or any other form of notification to the client. The local server monitors the traffic generated by IoT devices, extract features, and build a local dataset. Then, train the local model using the local dataset. However, training on local dataset is not sufficient to identify unknown IoT devices in the environment. As a result, the clients subscribe to the alarm service provided by the server through the use of FL. The server is responsible for running the proposed FL algorithm. Also, the server and clients cooperate to build a global model capable of classifying devices used by participating clients. In other words, clients can use the global model to identify unknown devices from the knowledge of other clients.

The proposed algorithm in the server trains a global model by sharing only the model's parameters with clients and thus preserving the security and privacy of clients. The process of training the global model is repeated every $\delta$ time units to make sure that the new clients, and clients with the new installed IoT devices, are considered and included.

Employing all clients in the training process produce high traffic, which overloads the network. Additionally, this might be infeasible since some clients are not available all the time. However, selecting clients with high accuracy contribution to the global model training enhance the classification accuracy, which is done by the proposed heuristic.

\subsection{Dataset Details and Preprocessing Phases}

To test the performance of the proposed heuristic, we use a real dataset collected by researchers from the University of New South Wales (UNSW), Sydney, Australia \cite{tmc_19}. The dataset is created using 28 IoT devices and also some non-IoT devices installed in a lab on the campus of the university. Trace data are captured over 6 months between October 1, 2016 and April 13, 2017. However, only 20 days of trace data are available for the public. Raw data consisting of packet headers and payload information are captured using the \verb|tcpdump| tool installed on the gateway. The dataset is available as a set of pcap (packet capture) files and also as a set of CSV (comma-separated values) files. The dataset consists of 20 pcap files, one file per day.

The raw dataset is processed in five phases (illustrated in Fig \ref{fig:SimulationPhases}) in order to create $N$ candidate client's datasets to simulate FL settings as described next.

In the \textit{\textbf{flows collection phase}} (\textbf{Phase-1}), we collect flows from raw data in pcap files using the \verb|joy| tool developed by Cisco Systems \cite{joy}. \verb|Joy| is a data collection tool that reads the data from raw traffic (or from pcap files) and produces a JavaScript Object Notation (JSON) file with a summary of the traffic data in the form of flows. We create a bash script that uses the \verb|joy| tool to process the pcap files and produce JSON files. Each JSON file contains flows related to a specific IoT device based on the MAC address listed in Table \ref{table_macs}, which includes names of devices and their MAC addresses as indicated in the dataset's website \cite{tmc_19}. To filter by MAC address, we use the Berkeley/BSD Packet Filter syntax supported by the \verb|joy| tool through the data feature options. Each flow in the resultant JSON file has a flow key that includes the source and destination addresses, and the source and destination port and protocol numbers. Each flow also contains number of bytes, number of packets, start time, and end time. Additionally, \verb|joy| can be configured to save more information per flow. Algorithm \ref{alg_flows_collection} describes the flow collection process. Also, the script is available on GitHub \cite{my_github}.

\begin{table}[htbp]
 \scriptsize
 \centering
 \caption{Names and MAC addresses of the used IoT devices}
 \label{table_macs}
 \begin{tabular}{c|c}
 \hline
 \thead{\textbf{\textit{IoT device name}}} & \thead{\textbf{\textit{MAC address}}}
 \\ \hline
 Amazon Echo & 44:65:0d:56:cc:d3
 \\ \hline
 August Doorbell Cam & e0:76:d0:3f:00:ae
 \\ \hline
 Awair air quality monitor & 70:88:6b:10:0f:c6
 \\ \hline
 Belkin Camera & b4:75:0e:ec:e5:a9
 \\ \hline
 Belkin Motion Sensor & ec:1a:59:83:28:11
 \\ \hline
 Belkin Switch & ec:1a:59:79:f4:89
 \\ \hline
 Blipcare BP Meter & 74:6a:89:00:2e:25
 \\ \hline
 Canary Camera & 7c:70:bc:5d:5e:dc
 \\ \hline
 Dropcam & 30:8c:fb:2f:e4:b2
 \\ \hline
 Google Chromecast & 6c:ad:f8:5e:e4:61
 \\ \hline
 Hello Barbie & 28:c2:dd:ff:a5:2d
 \\ \hline
 HP Printer & 70:5a:0f:e4:9b:c0
 \\ \hline
 iHome PowerPlug & 74:c6:3b:29:d7:1d
 \\ \hline
 LiFX Bulb & d0:73:d5:01:83:08
 \\ \hline
 NEST Smoke Sensor & 18:b4:30:25:be:e4
 \\ \hline
 Netatmo Camera & 70:ee:50:18:34:43
 \\ \hline
 Netatmo Weather station & 70:ee:50:03:b8:ac
 \\ \hline
 Phillip Hue Lightbulb & 00:17:88:2b:9a:25
 \\ \hline
 Pixstart photo frame & e0:76:d0:33:bb:85
 \\ \hline
 Ring Door Bell & 88:4a:ea:31:66:9d
 \\ \hline
 Samsung Smart Cam & 00:16:6c:ab:6b:88
 \\ \hline
 Smart Things & d0:52:a8:00:67:5e
 \\ \hline
 TP-Link Camera & f4:f2:6d:93:51:f1
 \\ \hline
 TP-Link Plug & 50:c7:bf:00:56:39
 \\ \hline
 Triby Speaker & 18:b7:9e:02:20:44
 \\ \hline
 Withings Baby Monitor & 00:24:e4:10:ee:4c
 \\ \hline
 Withings Scale & 00:24:e4:1b:6f:96
 \\ \hline
 Withings Sleep Sensor & 00:24:e4:20:28:c6
 \\
 \hline
 \end{tabular}
\end{table}

\begin{algorithm}[!t]
\footnotesize
\caption{Flows collection algorithm}
\label{alg_flows_collection}
\begin{algorithmic}[1]
\STATEx \textbf{Input}: dataset pcap files.
\STATEx \textbf{Output}: JSON files.

\FOR {each pcap file as $pFileName$}
\STATE Open $pFileName$ for reading
\STATE Set $deviceCo = 1$
\STATE Set $json = pFileName+deviceCo$
\FOR {each MAC address in Table \ref{table_macs} as $mac$}
\STATE Run \verb|joy| with $pFileName$ as input, $json$ as output, and
\STATEx \quad \quad $mac$ as the host MAC address
\STATE Set $deviceCo = deviceCo + 1$
\ENDFOR
\STATE Close $pFileName$
\ENDFOR
\end{algorithmic}
\end{algorithm}

\begin{table}[!htbp]
 \footnotesize
 \centering
 \caption{IoT device features.} 
 \label{table_features}
 \begin{tabular}{c|p{4.8cm}}
 \hline
 \thead{\textbf{\textit{Feature}}} & \thead{\textbf{\textit{Description}}}
 \\ \hline
 $totalSleepTime$ & Total time of no activity
 \\ \hline
 $totalActiveTime$ & Total time of activity
 \\ \hline
 $totalFlowVolume$ & Number of bytes (sent/received) by the IoT device
 \\ \hline
 $flowRate$ & Total flow volume divided by total active time
 \\ \hline
 $avgPacketSize$ & Number of bytes sent or received divided by no. of packets sent or received
 \\ \hline
 $numberOfServers$ & Number of servers \\ & Excluding DNS (53) and NTP (123)
 \\ \hline
 $numberOfProtocols$ & Number of protocols \\ & based on destination port number
 \\ \hline
 $numberOfUniqueDNS$ & Number of unique DNS requests
 \\ \hline
 $DNSinterval$ & Total time for using DNS
 \\ \hline
 $NTPinterval$ & Total time for using NTP
 \\
 \hline
 \end{tabular}
\end{table}

\begin{algorithm}[!h]
\footnotesize
\caption{Features extraction algorithm}
\label{alg_features_extraction}
\begin{algorithmic}[1]
\STATEx \textbf{Input}: JSON files.
\STATEx \textbf{Output}: features CSV file.

\STATE Open features file for writing
\STATE Set $maxPeriod$ = 10 minutes
\FOR {each JSON file}
\STATE Open JSON file for reading
\STATE Read $deviceID$ from JSON file
\STATE Set $totalSleepTime = 0$; $totalActiveTime = 0$
\STATE Set $totalFlowVolume = 0$; $totalPackets = 0$
\STATE Set $numberOfServers = 0$; $numberOfProtocols = 0$
\STATE Set $numberOfUniqueDNS = 0$; $DNSinterval = 0$
\STATE Set $NTPinterval = 0$; $lastFlowEndTime = 0$

\FOR {each flow}
\STATE Set \#flow = flow number
\STATE Set $flowTime = flowEndTime - flowStartTime$
\STATE Set $totalActiveTime$ = $totalActiveTime$ + $flowTime$
\STATE Set $totalFlowVolume$ = $totalFlowVolume$ + number of bytes in the flow
\STATE Set $totalPackets$ = $totalPackets$ + number of packets in the flow
\IF {port in flow is not recorded before}
\STATE Set $numberOfProtocols$ = $numberOfProtocols$ + 1
\STATE Record port
\ENDIF
\IF {port in flow = 53}
\STATE Set $DNSinterval$ = $DNSinterval$ + $flowTime$
\IF {DNS query in flow is not recorded before}
\STATE Set $numberOfUniqueDNS$ = $numberOfUniqueDNS$ + 1
\STATE Record DNS query
\ENDIF
\ELSIF {port in flow = 123}
\STATE Set $NTPinterval$ = $NTPinterval$ + $flowTime$
\ELSE
\IF {destination address in flow is not recorded before}
\STATE Set $numberOfServers$ = $numberOfServers$ + 1
\STATE Record destination address
\ENDIF
\ENDIF
\IF {\#flow = 1}
\STATE Set $startTime = flowStartTime$
\ELSE
\STATE Set $totalSleepTime$ = $totalSleepTime$ + 
\STATEx \qquad \qquad $(flowStartTime - lastFlowEndTime)$
\IF {$flowEndTime - startTime \geq maxPeriod$}
\STATE Set $flowRate = 0$
\IF {$totalActiveTime \geq 0$}
\STATE Set $flowRate = totalFlowVolume/totalActiveTime$
\ENDIF
\STATE Set $avgPacketSize = 0$
\IF {$totalPackets \geq 0$}
\STATE Set $avgPacketSize$=$totalFlowVolume/totalPackets$
\ENDIF
\STATE Add a record to features file with features and $deviceID$
\STATE Reinitialize all features variables
\ENDIF
\ENDIF
\STATE Set $lastFlowEndTime = flowEndTime$
\ENDFOR
\STATE Close JSON file
\ENDFOR
\STATE Close features file
\end{algorithmic}
\end{algorithm}

In the \textit{\textbf{features extraction phase}} (\textbf{Phase-2}), we extract features from the flows stored in JSON files. Inspired by a previous study \cite{tmc_19}, we analyze the flows and extract features as listed in Table \ref{table_features}. Features are saved in a CSV file with the first 10 columns for features and the last column for the labels, which are the IoT device IDs. Algorithm \ref{alg_features_extraction} shows the steps used in the extraction process. In addition, the Python code for extracting the features is made available on GitHub \cite{my_github}.

In the \textit{\textbf{normalization} phase} (\textbf{Phase-3}), we first normalize all features by transforming features' values to be between 0 and 1 using the \verb|MinMaxScaler| function from the \verb|scikit-learn| library \cite{scikit-learn}. Second, to ensure that samples are distributed randomly, we randomly re-index all normalized features in the dataset.

In the \textit{\textbf{ML model design phase}} (\textbf{Phase-4}), we split the dataset into two parts: the training dataset (80\% of the original) and the test dataset (20\% of the original). To ensure a fair comparison between the proposed algorithm and other algorithms, the test dataset is stored on the server. We then design a Deep Neural Network (DNN)-based ML model with three layers, each having 25 neurons. The first two layers use the ReLu activation function, while the last layer uses a softmax activation function. Adam optimizer is utilized for optimization.

Finally, in the \textit{\textbf{dataset splitting phase}} (\textbf{Phase-5}), we create $N$ datasets to represent local datasets for the $N$ candidate clients. Each of the $N$ datasets is created randomly from the training dataset. To reflect a real scenario, we ensure that those datasets do not have the same size. The majority of candidate clients possess a small amount of the dataset while the minority of candidate clients possess large portions of the dataset. Consequently, the fat clients constitute 20\% of candidate clients and each fat client has about 10\% of the training dataset selected randomly. On the other hand, the thin clients constitute 80\% of candidate clients and each thin client has about 1\% of the training dataset randomly selected. All these parameters along with other FL parameters are listed in Table \ref{table_FL_parameters}.

\begin{table}[!h]
 \footnotesize
 \centering
 \caption{FL Parameters.}
 \label{table_FL_parameters}
 \begin{tabular}{c|c}
 \hline
 \thead{\textbf{\textit{Parameter}}} & \thead{\textbf{\textit{Value(s)}}}
 \\ \hline
 Batch size & 3
 \\ \hline
 $E$ (Epochs) & 8
 \\ \hline
 $K$ (Communication rounds) & 20
 \\ \hline
 Test dataset & 20\% of the dataset
 \\ \hline
 Train dataset & 80\% of the dataset
 \\ \hline
 Number of fat clients & 20\% of $N$
 \\ \hline
 Number of thin clients & 80\% of $N$
 \\ \hline
 Fat client dataset & 10\% of the train dataset
 \\ \hline
 Thin client dataset & 1\% of train dataset
 \\ \hline
 \end{tabular}
\end{table}

\begin{table}[htbp]
 \footnotesize
 \centering
 \caption{Simulation Parameters.}
 \label{table_Sim_parameters}
 \begin{tabular}{c|c|c}
 \hline
 \thead{\textbf{\textit{Sym.}}} & \thead{\textbf{\textit{Parameter}}} & \thead{\textbf{\textit{Value(s)}}}
 \\ \hline
 $N$ & No. of candidate clients & 
 (100, 200, 400, 800, and 1600)
 \\ \hline
 $R$ & No. of best candidate clients &
 (10, 20, 30, 40, and 50)
 \\ \hline
 $r_1$ & Minimum no. of & 1 \\
 &  best candidate clients &
 \\ \hline
 $r_2$ & Maximum no. & (1, 2, 3, 4, and 5) \\
 & of best candidate clients &
 \\ \hline
 \end{tabular}
\end{table}

\subsection{Experiments}

After Phase-5, $N$ candidate clients' datasets are formed to simulate a Hybrid FL setting, which is utilized in the conducted experiments. To measure the performance of the proposed heuristic, we compare the results of two algorithms against the proposed heuristic. The two algorithms are the online random algorithm and the offline best algorithm. The online random algorithm selects and rejects candidate clients randomly. On the other hand, the offline best algorithm is an offline algorithm that can work with all candidate clients at the same time. In other words, the offline best algorithm does not have to wait for clients to be available over time and instead have the advantage of working with all candidate clients at the same time. The offline best algorithm creates a sorted list of all (i.e., $N$) candidate clients based on accuracy and selects the top $R$ candidate clients, which are mostly fat candidate clients.

We conduct 125 experiments to test the performance of the proposed heuristic. In all these experiments, we fixed the number of communication rounds to 20; the number of epochs per client to 8; and the batch size to 3 (as shown in Table \ref{table_FL_parameters}). We are not interested in optimizing the aforementioned parameters since the goal of this paper is not to achieve the highest accuracy possible, but to investigate the ability of the proposed heuristic compared against the state-of-the-art algorithms. Thus, we vary the number of clients $N$, number of selected candidate clients $R$, and $r_2$ (used to compute the value of $\alpha^*$) each with five different values as indicated in Table \ref{table_Sim_parameters}. Experimenting with different values of $N$, $R$, and $r_2$ is vital to truly test the abilities of the proposed heuristic.

The values in Table \ref{table_Sim_parameters} are not selected arbitrarily. We test with values of $N$ that vary from hundreds to thousands by doubling the numbers to see how this increase affects the performance. As for $R$, we test with different values in tens and noticed that raising $R$ more is not interested since the performance of all algorithms converges as explained later. Setting $r_1$ to one is a must since we need to select the first best candidate client. Additionally, we noticed that raising the value of $r_2$ to more than 5 will results in a very low $\alpha^*$ especially when $N$ is 100. In other words, setting $r_2$ to higher numbers will reduce the search size to zero candidate clients since $\alpha^*$ will be close to zero.

\begin{figure*}[htbp]
\centering
\begin{subfigure}[b]{0.32\textwidth}
\includegraphics[width=\textwidth]{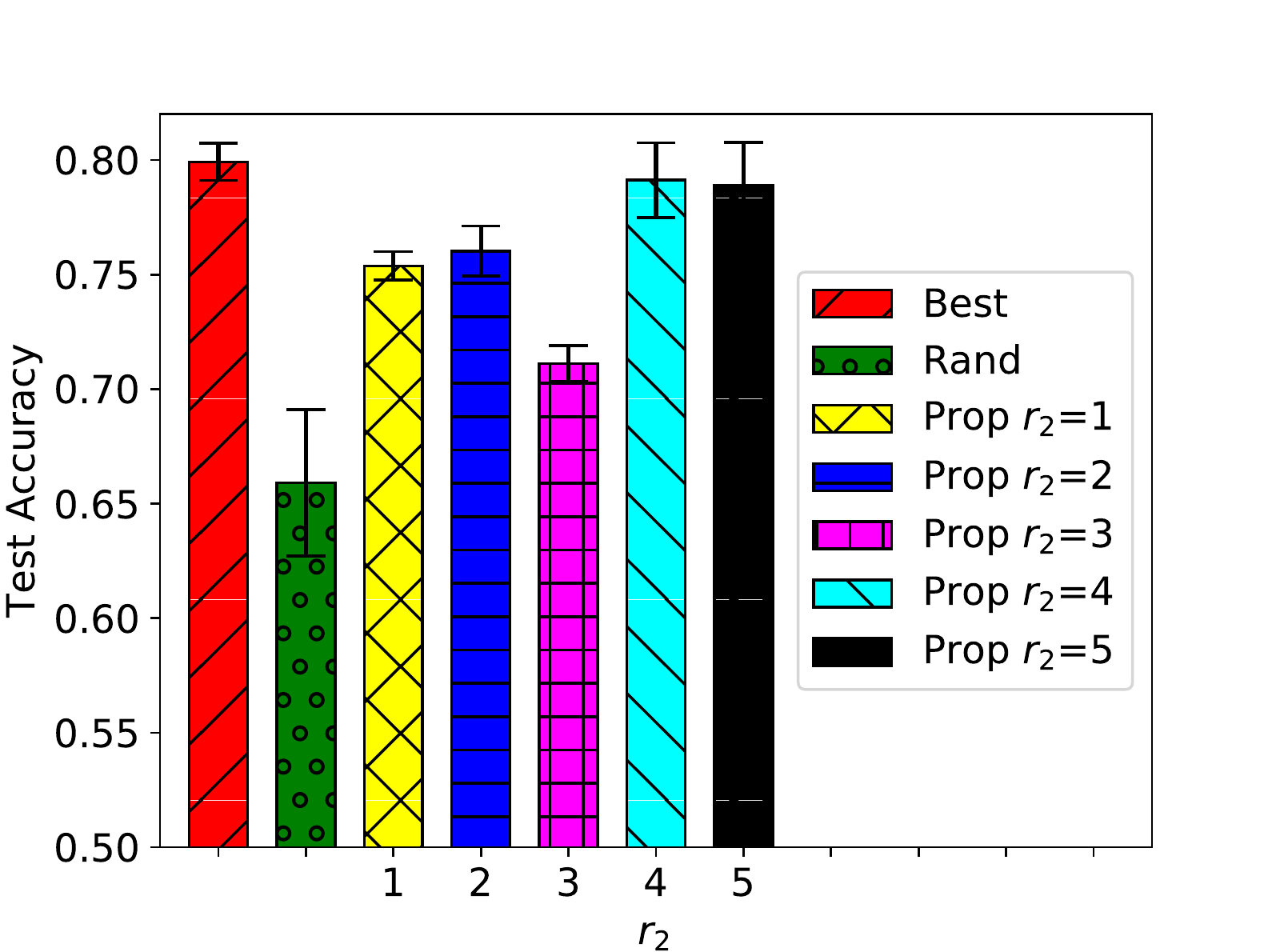}
\caption{Test accuracy.}
\label{fig:A_r2}
\end{subfigure}
\begin{subfigure}[b]{0.32\textwidth}
\includegraphics[width=\textwidth]{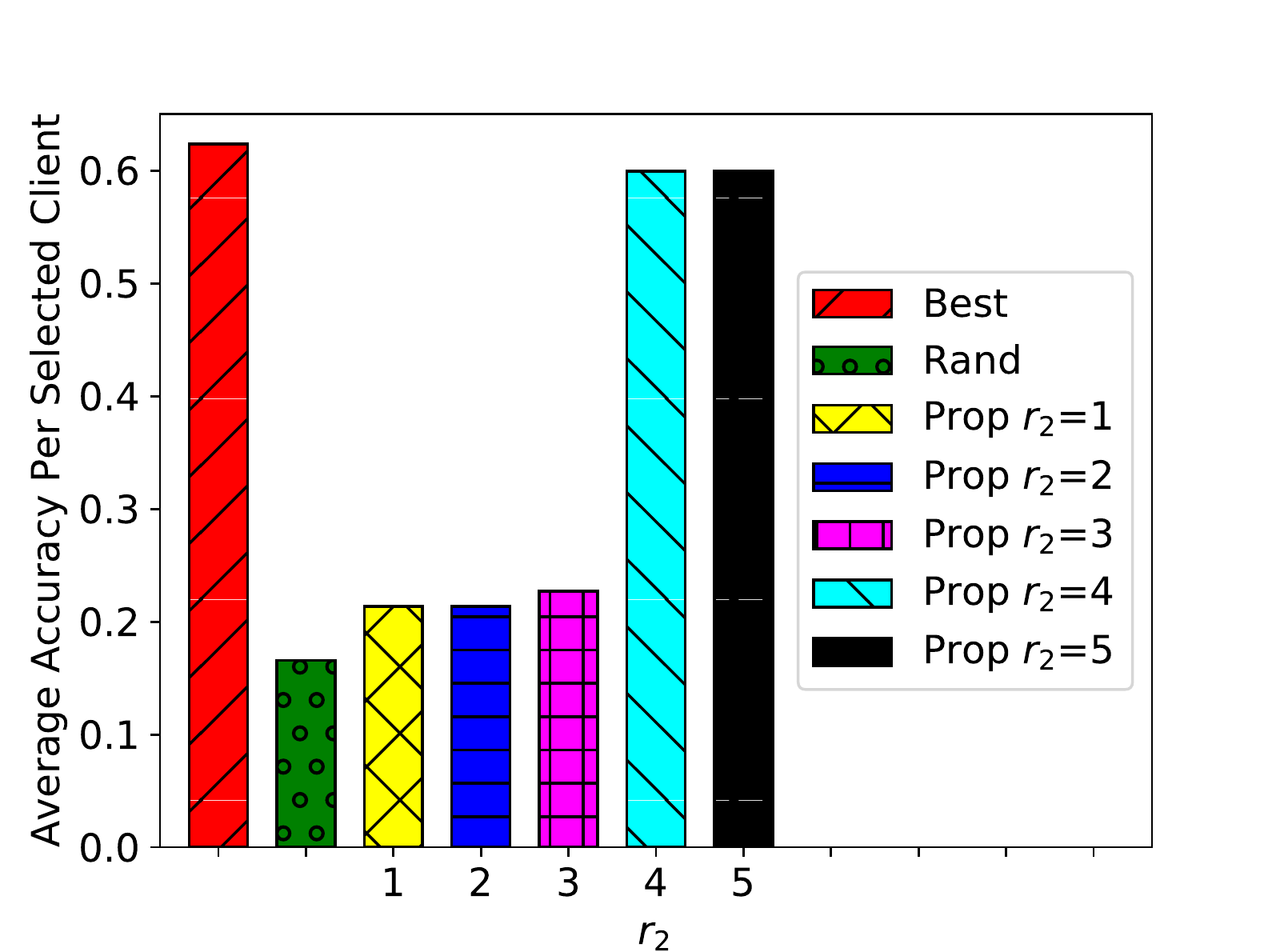}
\caption{Average accuracy per selected client.}
\label{fig:Ac_r2}
\end{subfigure}
\begin{subfigure}[b]{0.32\textwidth}
\includegraphics[width=\textwidth]{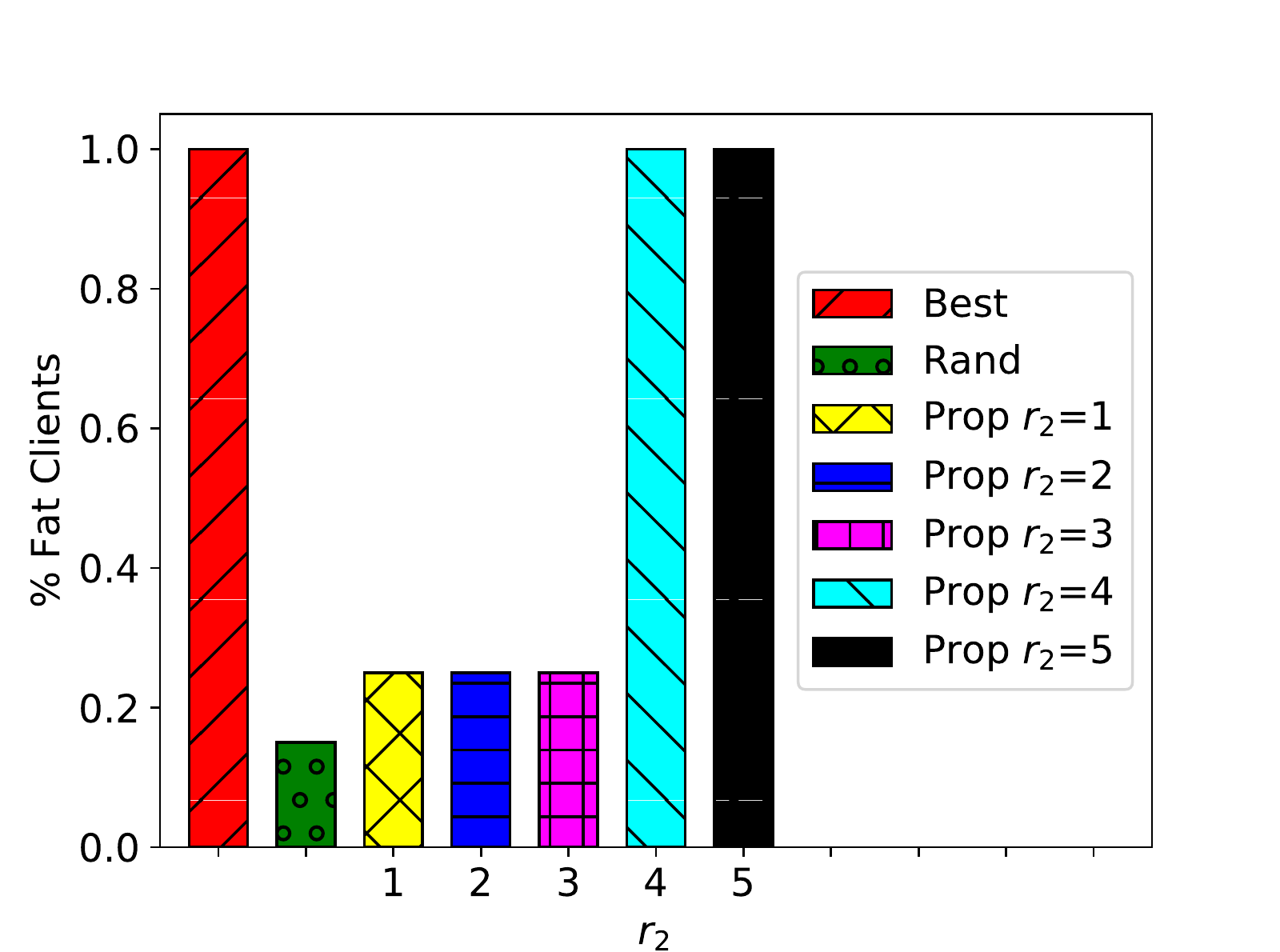}
\caption{\% Fat clients.}
\label{fig:fat_r2}
\end{subfigure}
\caption{Performance of algorithms for different $r_2$ values (1, 2, 3, 4, 5) while fixing $N$, number of clients, to 400 and $R$, number of selected clients, to 20. \textit{Our proposed algorithm performs better than the random algorithm approaching the performance of the best algorithm as $r_2$ is increased.}}
\label{fig:r2}
\end{figure*}

\begin{figure*}[htbp]
\centering
\begin{subfigure}[b]{0.32\textwidth}
\includegraphics[width=\textwidth]{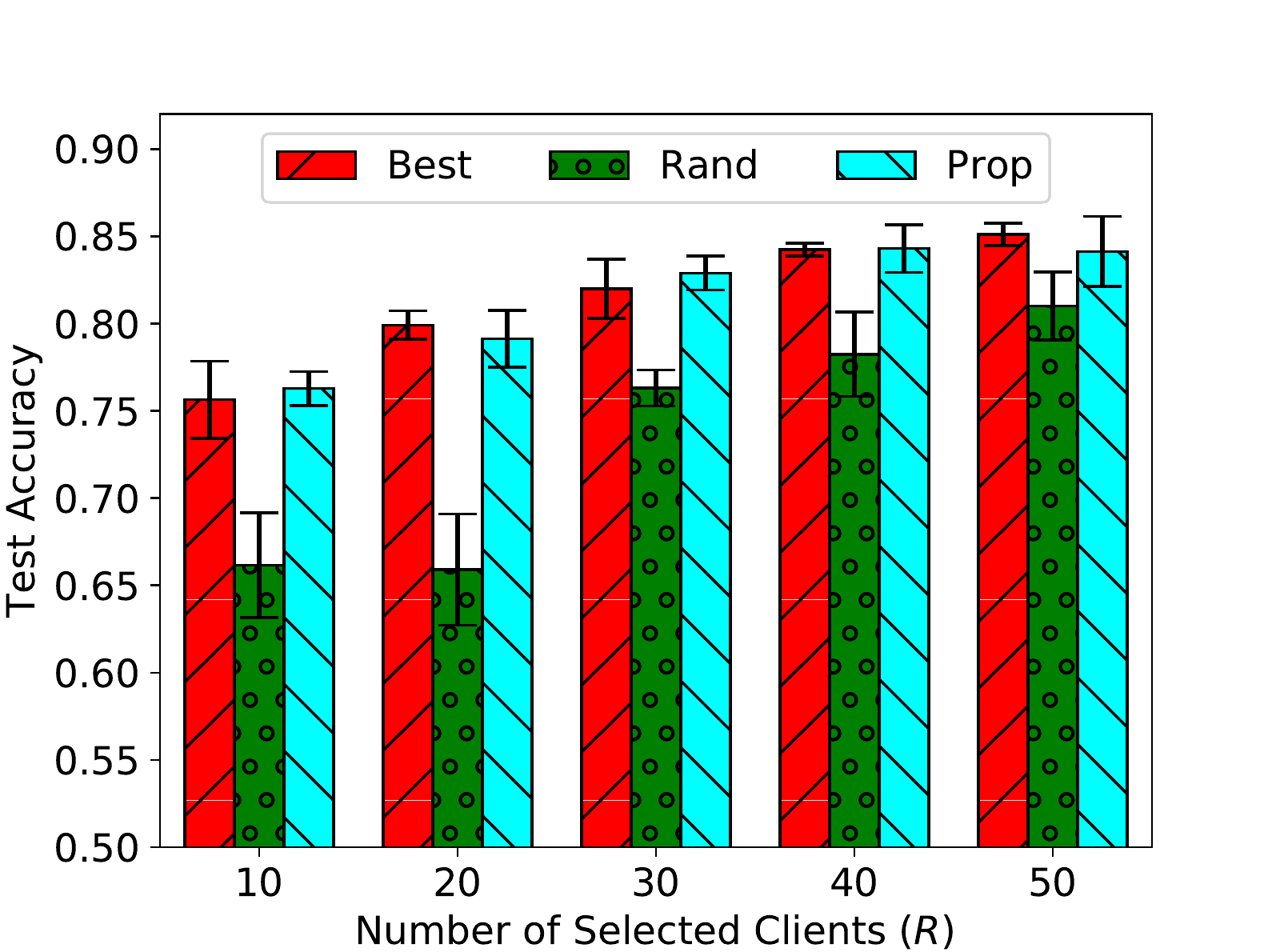}
\caption{Test accuracy.}
\label{fig:A_R}
\end{subfigure}
\begin{subfigure}[b]{0.32\textwidth}
\includegraphics[width=\textwidth]{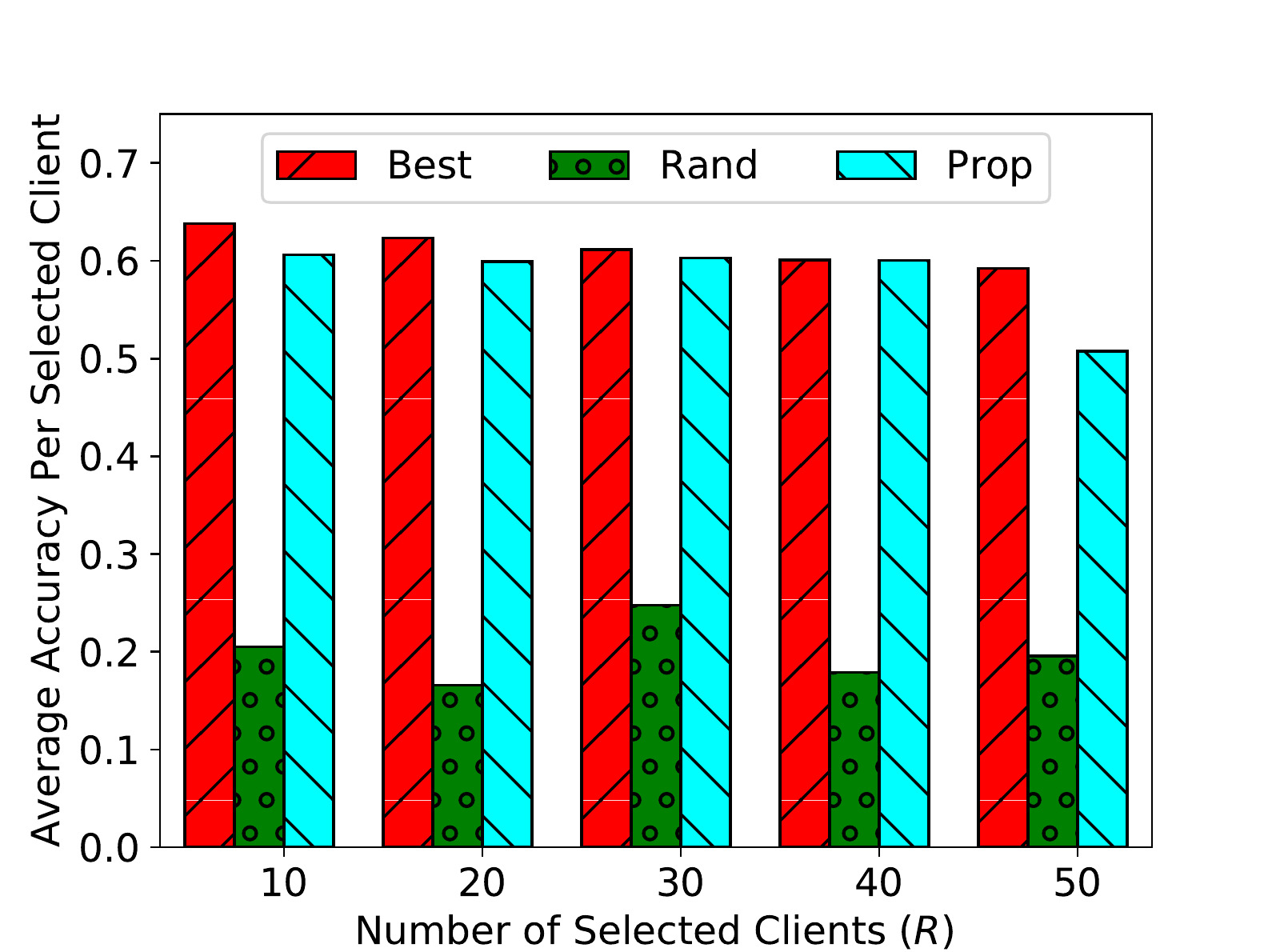}
\caption{Average accuracy per selected client.}
\label{fig:Ac_R}
\end{subfigure}
\begin{subfigure}[b]{0.32\textwidth}
\includegraphics[width=\textwidth]{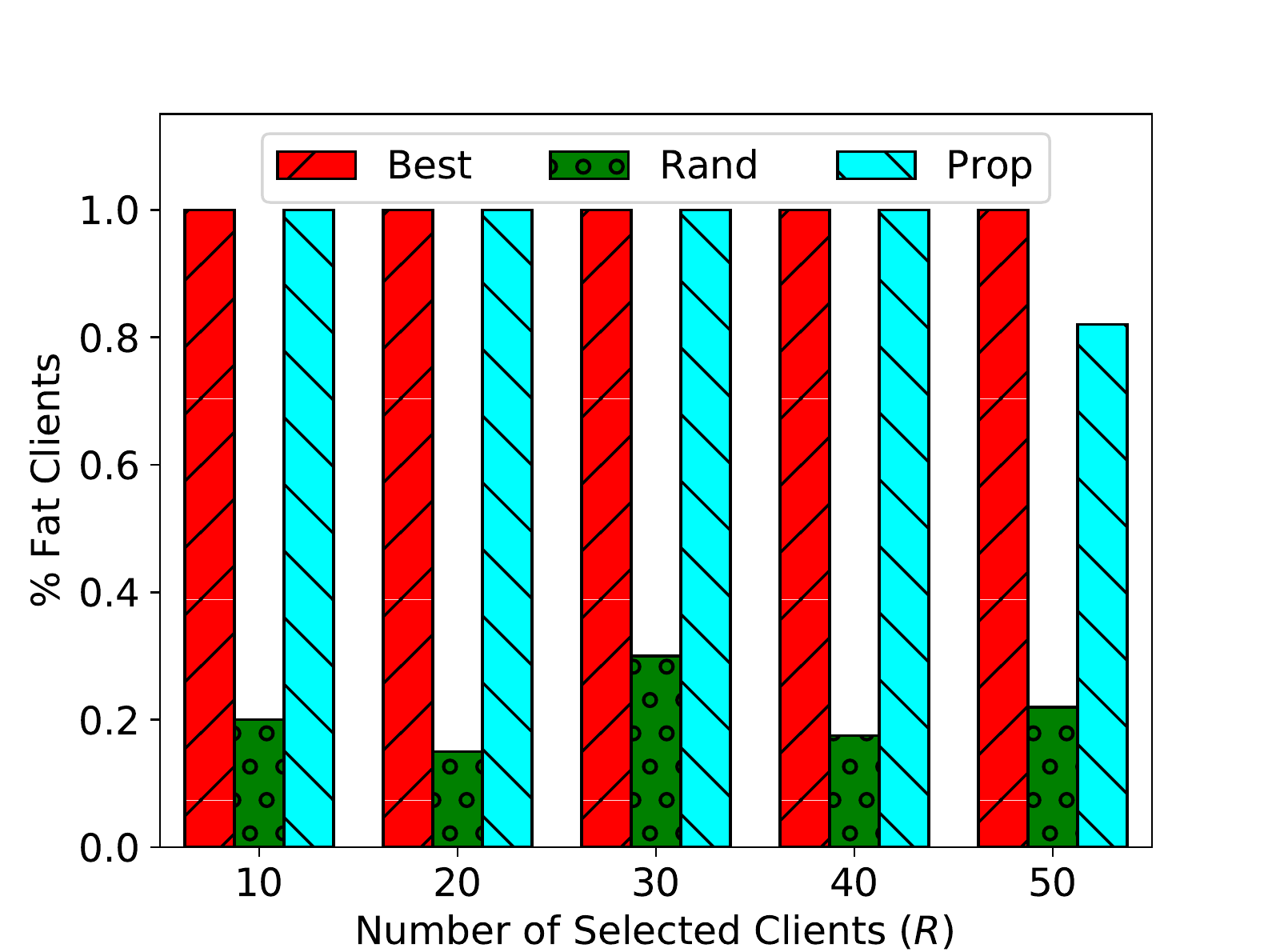}
\caption{\% Fat clients.}
\label{fig:fat_R}
\end{subfigure}
\caption{Performance of algorithms for different $R$, number of selected clients, values (10, 20, 30, 40, 50) while fixing $N$, number of clients, to 400 and $r_2$ to 4 ($\alpha^*$ is 43). \textit{Our proposed algorithm is more competitive for smaller values of $R$ and as $R$ is increased, the performance of algorithms converges.}}
\label{fig:R}
\end{figure*}

\begin{figure*}[htbp]
\centering
\begin{subfigure}[b]{0.32\textwidth}
\includegraphics[width=\textwidth]{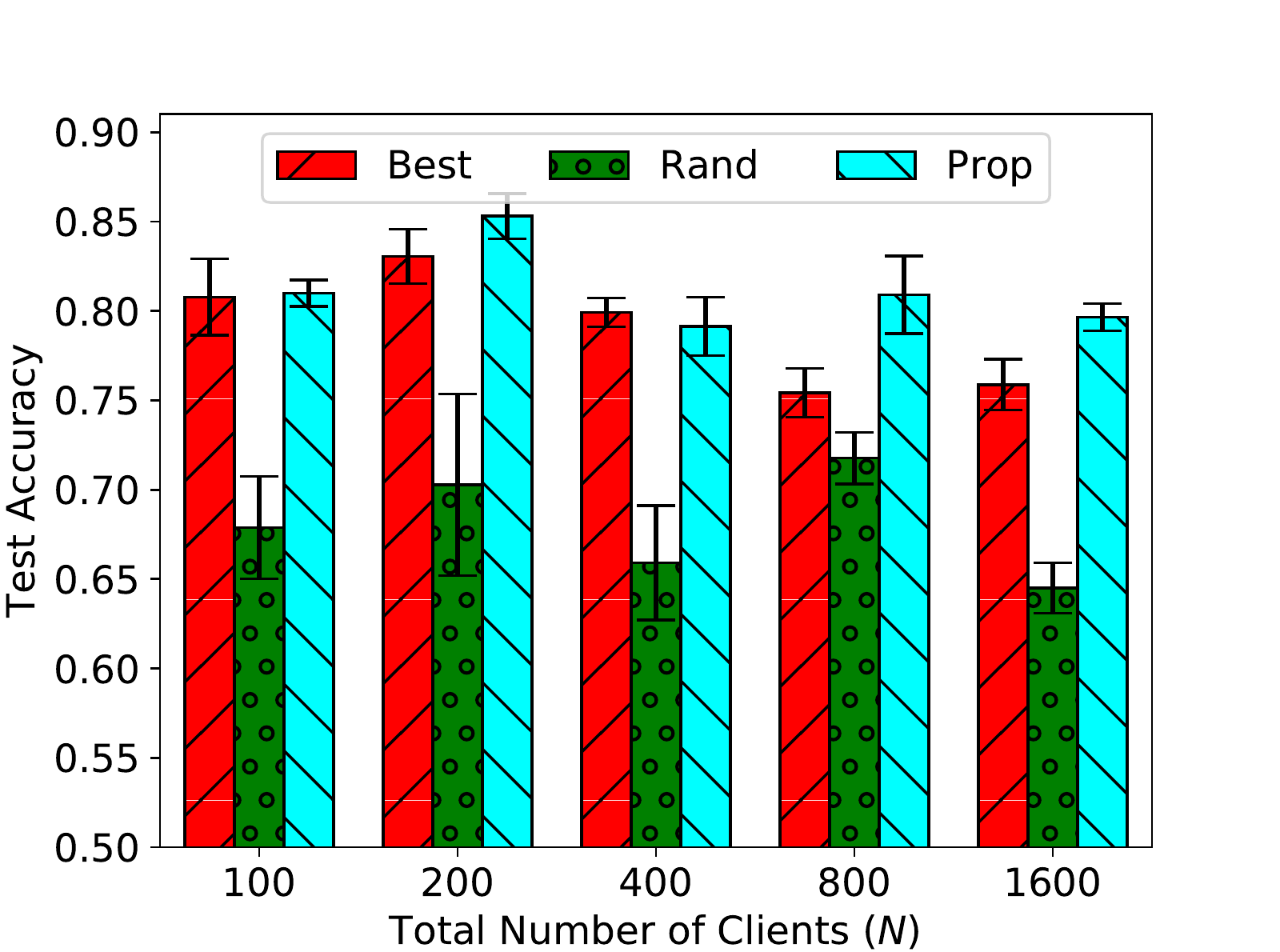}
\caption{Test accuracy.}
\label{fig:A_N}
\end{subfigure}
\begin{subfigure}[b]{0.32\textwidth}
\includegraphics[width=\textwidth]{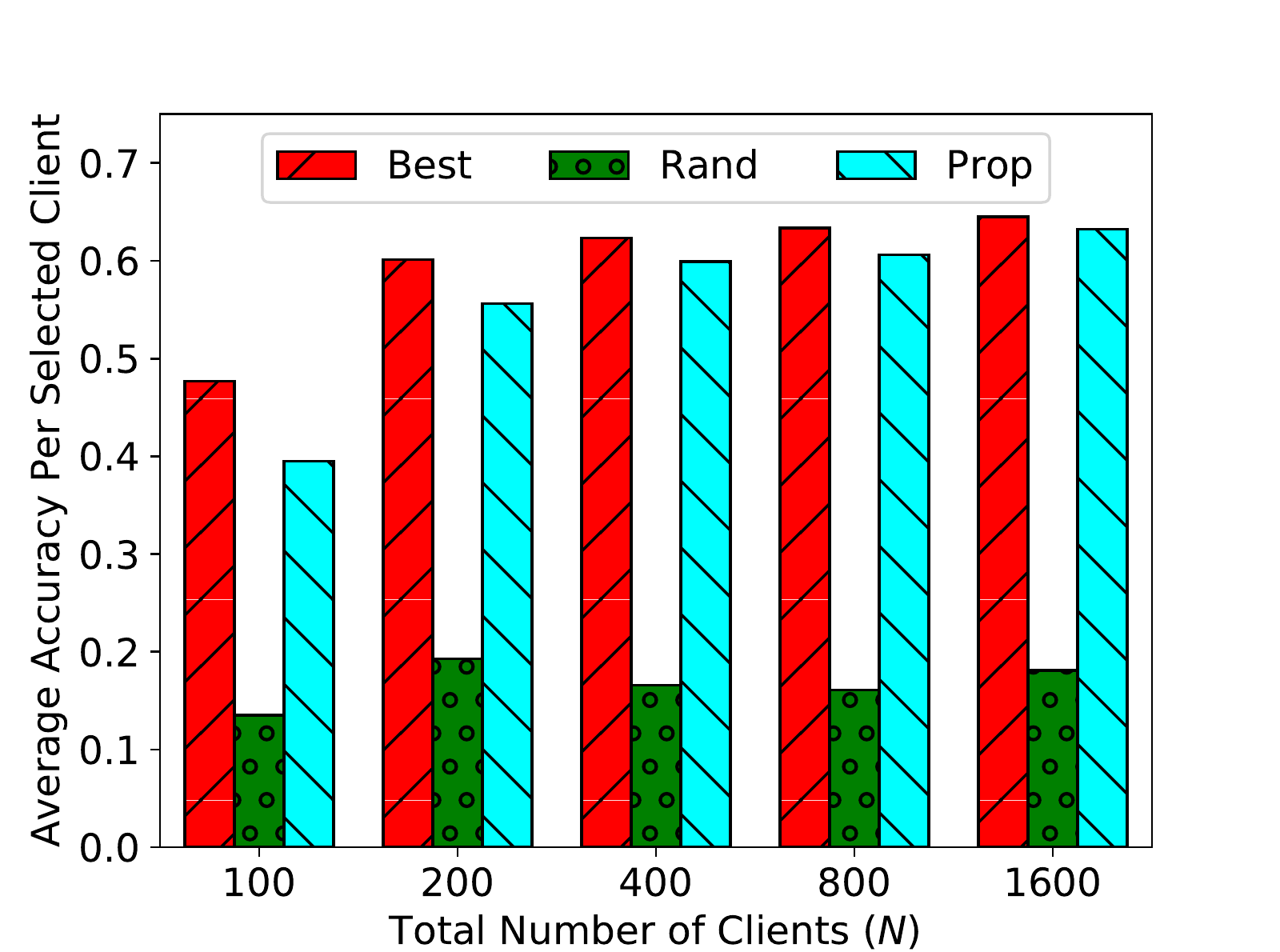}
\caption{Average accuracy per selected client.}
\label{fig:Ac_N}
\end{subfigure}
\begin{subfigure}[b]{0.32\textwidth}
\includegraphics[width=\textwidth]{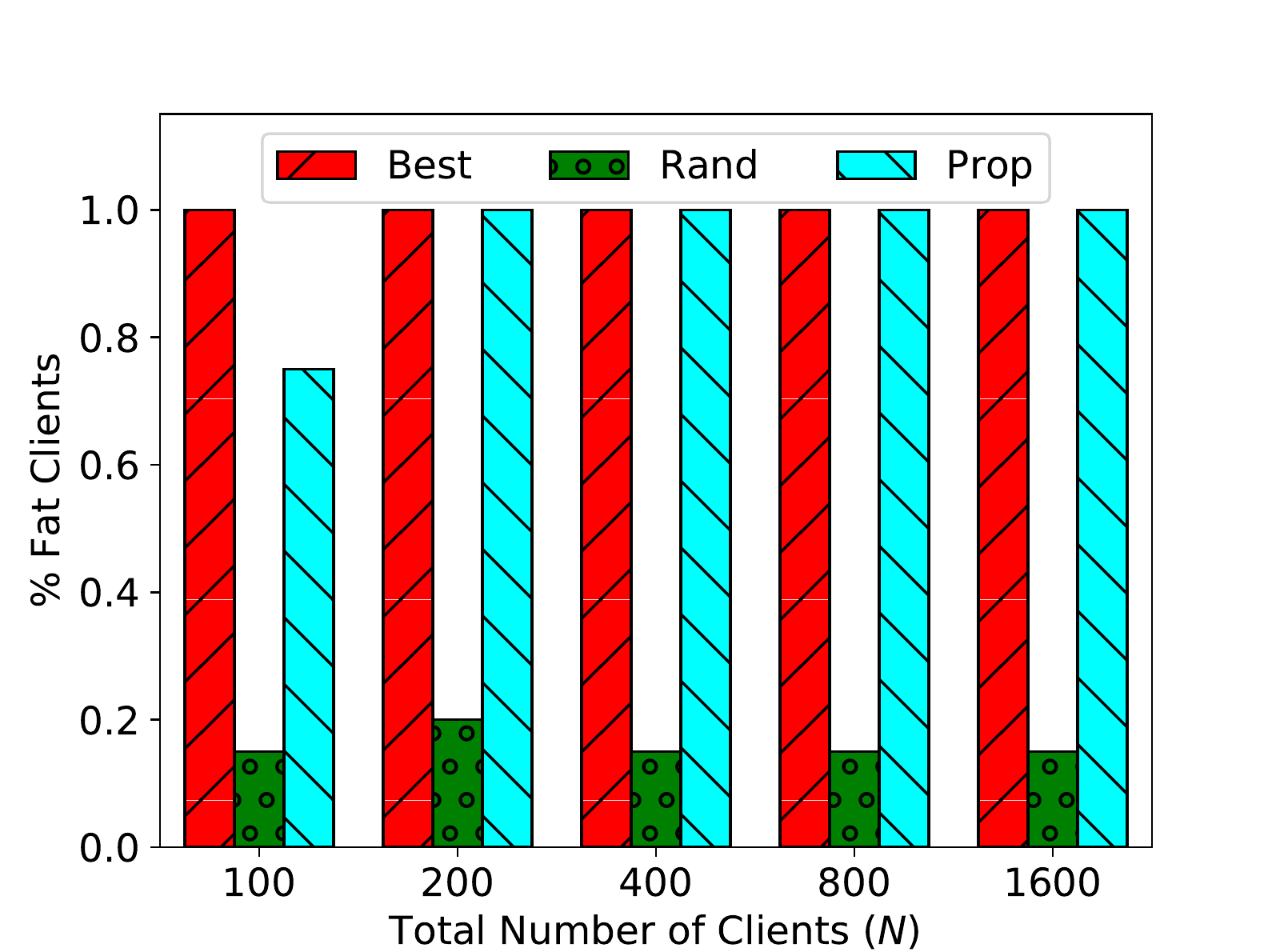}
\caption{\% Fat clients.}
\label{fig:fat_N}
\end{subfigure}
\caption{Performance of algorithms for different $N$, number of clients, values (100, 200, 400, 800, and 1600) while fixing $R$, number of selected clients, to 30 and $r_2$ to 4 (different $\alpha$ per $N$ value). \textit{Our proposed algorithm performs better than the random algorithm approaching the performance of the best algorithm regardless of the value of $N$.}}
\label{fig:N}
\end{figure*}

\section{Discussion}
\label{result_discussion}

Since we cannot present all values and figures of the 125 experiments, we fixed In each case 2 of the variables (i.e., $N$, $R$, and $r_2$) and show the results for changing the third variable. Also, for each case, we present 3 figures: the first represents the general accuracy of the system; the second for the average accuracy of selected candidate clients, which is used to indicate the contribution of individual candidate clients toward the general accuracy of the system; and finally, the percentage of the accepted Fat clients, which is useful for investigating if more Fat clients lead to higher accuracy.

\subsection{Experimenting with different values of $r_2$}

The results illustrated in Fig. \ref{fig:r2} support the discussion in Section \ref{performance_analysis}. Increasing the value of $r_2$ while fixing $r_1$ to 1 reduces the value of $\alpha^*$ and thus increases the probability of finding the best candidate clients as shown in Fig. \ref{fig:A_r2}. Furthermore, Fig. \ref{fig:Ac_r2} and Fig. \ref{fig:Ac_r2} confirms the fact that increasing the value of $r_2$ leads to accepting more fat candidate clients with higher accuracy.

\subsection{Experimenting with different values of $R$}

Fig. \ref{fig:R} shows that the proposed heuristic is more competitive when the number of selected candidate clients (i.e., $R$) is low. However, as $R$ increases, the accuracy of all algorithms converges as indicated in Fig. \ref{fig:A_R}. As the number of selected candidate clients increases, all algorithms will have a good portion of the dataset and will be able to converge to a high accuracy in less time. As a result, there is no problem to solve for high values of $R$. Besides, sometimes it is not feasible to contact many candidate clients since some of them are not available. Figures \ref{fig:Ac_R} and \ref{fig:fat_R} show that the accuracy of the system increases regardless of the accuracy of the individual candidate clients and the number of fat nodes.

\subsection{Experimenting with different values of $N$}

The performance of the proposed heuristic is almost stable when the total number of candidate clients is increased while fixing $R$ to 30 as illustrated in Fig. \ref{fig:N}. The accuracy of the proposed heuristic is almost 80\% for different values of $N$ as shown in Fig. \ref{fig:A_N}. Additionally, Fig. \ref{fig:Ac_N} and Fig. \ref{fig:fat_N} support this argument.

\subsection{Lessons Learned}
\label{lessons_learned}

We can conclude here that based on the presented results:

\begin{itemize}
 \item The performance of the proposed online heuristic is stable regardless of the number of total clients $N$, as illustrated in Fig. \ref{fig:N}.
 
 \item The accuracy of the proposed heuristic increases as the number of selected candidate clients $R$ increases. However, as $R$ goes up, the performance of the proposed online heuristic, the online random algorithm, and the offline best algorithm tends to converge as indicated in Fig. \ref{fig:R}. This is because, with more candidate clients, algorithms have access to a larger portion of the overall dataset.
 
 \item As the number of best candidate clients ($r_2$) is increased, the performance of the proposed online heuristic is enhanced since better candidate clients are used as shown in Fig. \ref{fig:r2}.
\end{itemize}

\section{Conclusions and Future Work}
\label{conclusion_future_work}

In this paper, the problem of optimizing accuracy in stateful federated learning by selecting the best candidate clients based on test accuracy is considered. Then, the problem of maximizing the probability of selecting the best candidate clients based on accuracy is formulated as a secretary problem and performance analysis is presented along with proofs. Based on the formulation, an online stateful federated learning heuristic is proposed to find the best candidate clients. In addition, an IoT client alarm application is proposed that utilizes the proposed heuristic along with IoT device type classification to identify unauthorized IoT devices and alert clients. To test the efficiency of the proposed heuristic, we run many experiments using a real IoT dataset and the performance of the online random algorithm and the offline best algorithm are compared against the performance of the proposed heuristic. Results show that the proposed heuristic performs better than the two state-of-the-art algorithms. Additionally, we notice the stability in the performance of the proposed heuristic compared against the performance of the other two algorithms regardless of the number of participating candidate clients. We also notice that when increasing the number of best selected candidate clients, the proposed heuristic becomes less competitive. This is because with more clients comes more data and thus the performance of algorithms converges regardless of how bad an algorithm in selecting candidate clients.

We want to emphasize a disclaimer that the proposed algorithm is not designed to work efficiently in stateless FL. Moreover, the proposed algorithm is not competitive in applications of offline nature and/or with open budget in terms of selected clients since clients' dataset size and testing accuracies are known in advance and all clients can be used in the training process.

In the future, we plan to devise different variations of the secretary problem and provide performance analysis along with proofs for each. We also intend to run several experiments using a real dataset to evaluate those variations and compare their performance with the performance of the proposed heuristic.

\ifCLASSOPTIONcaptionsoff
 \newpage
\fi

\bibliographystyle{IEEEtran}
\bibliography{biblo}

\vskip -2\baselineskip plus -1fil
\begin{IEEEbiography}[{\includegraphics[width=1in, height=2.5in, clip,keepaspectratio]{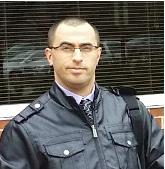}}]{Ihab Mohammed} is an Assistant Professor at the Computer Science Department of Western Illinois University (WIU). He received his B.S. and M.S. degrees in computer science from Al-Nahrain University, Baghdad, Iraq, in 2002 and 2005, respectively. He received his Ph.D. from Western Michigan University in 2020. In 2005, he joined the Department of Computer Science, Al-Nahrain University as a lecturer. In 2014, he joined Western Michigan University as a Ph.D. student and worked in different positions including Research Assistant, Graduate Teaching Assistant, and Part-Time Faculty Instructor. Also, Ihab served as a reviewer for high-quality journals such as IEEE Transaction of Mobile Computing, IEEE Internet of Things Journal, IET Networks, and IET Intelligent Transport Systems. Also, he served as a reviewer for several conferences including IEEE VTC-2018, IEEE ISCC 2018, and IEEE IWCMC 2018. His current research interests include Internet of Things (IoT), Security, Cloud Computing, Data Analysis and Algorithm Design with Machine Learning, Deep Learning, and Federated Learning.
\end{IEEEbiography}
\vskip -2\baselineskip plus -1fil

\begin{IEEEbiography}[{\includegraphics[width=1in, height=2.5in, clip,keepaspectratio]{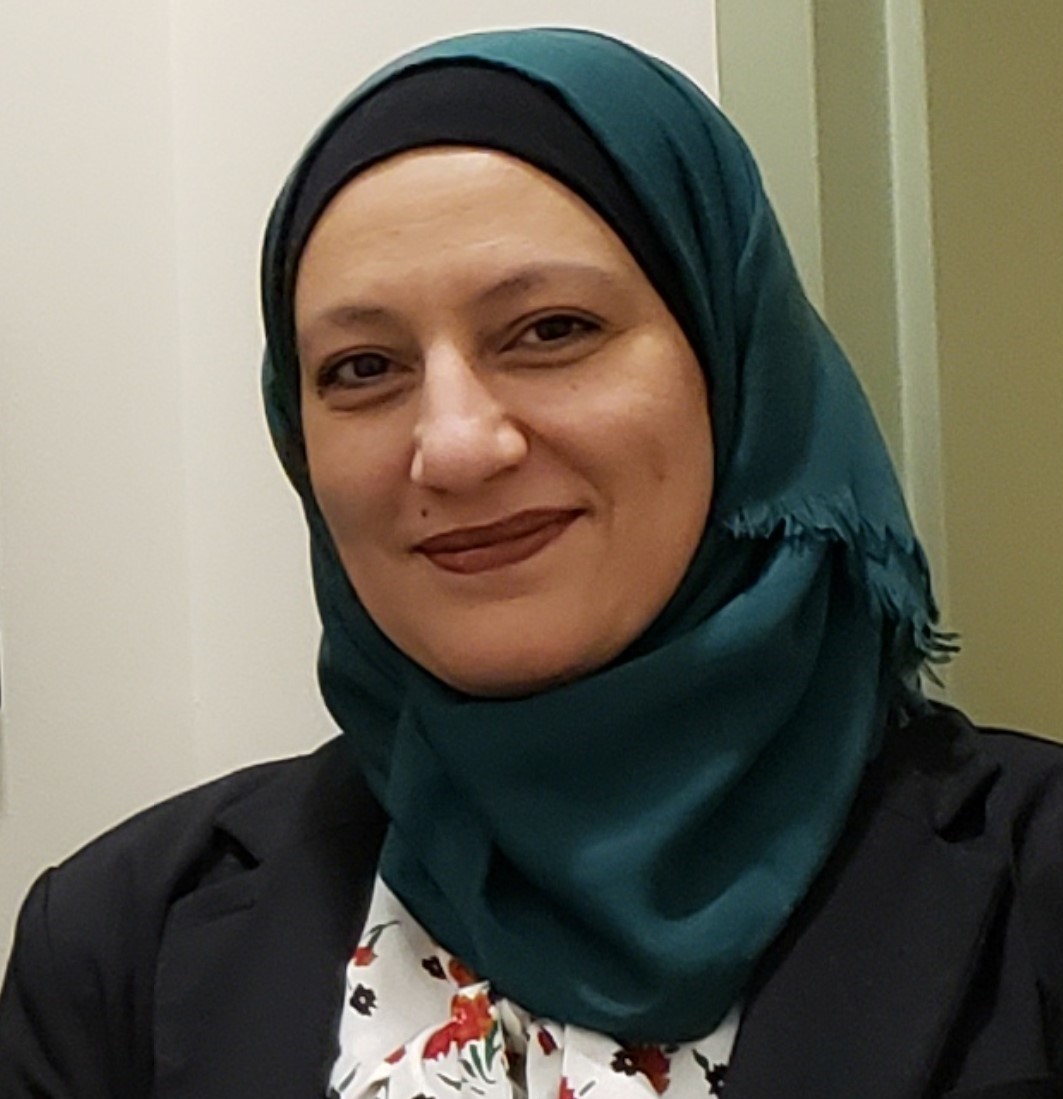}}]{Shadha Tabatabai} is a Ph.D. candidate at the CS department of Western Michigan University (WMU). She received her B.S. and M.S. degrees in computer science from Al-Nahrain University, Baghdad, Iraq, in 1998 and 2001, respectively. Shadha joined the CS Department at Al-Nahrain University n 2002 as an Assistant Lecturer and promoted to a Lecturer later in 2012. Additionally, she served as Research Assistant, Graduate Teaching Assistant, and Part-Time Faculty Instructor for the CS department at WMU. Moreover, she served as a reviewer for different conferences including IEEE ISCC 2018, IEEE IWCMC 2018, and IEEE VTC-2018. Her current research interests include Internet of Things, cloud computing, machine learning, and reinforcement learning.
\end{IEEEbiography}
\vskip -2\baselineskip plus -1fil

\begin{IEEEbiography}[{\includegraphics[width=1in, height=1.25in,clip,keepaspectratio]{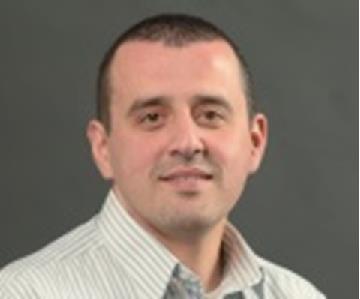}}]{Ala Al-Fuqaha} (S'00-M'04-SM'09) (S'00-M'04-SM'09) received Ph.D. degree in Computer Engineering and Networking from the University of Missouri-Kansas City, Kansas City, MO, USA. He is currently a professor at the Information and Computing Technology division, college of Science and Engineering, Hamad Bin Khalifa University (HBKU) and the Computer Science department, Western Michigan University. His research interests include the use of machine learning in general and deep learning in particular in support of the data-driven and self-driven management of large-scale deployments of IoT and smart city infrastructure and services, Wireless Vehicular Networks (VANETs), cooperation and spectrum access etiquette in cognitive radio networks, and management and planning of software defined networks (SDN). He is a senior member of the IEEE and an ABET Program Evaluator (PEV). He serves on editorial boards of multiple journals including IEEE Communications Letter and IEEE Network Magazine. He also served as chair, co-chair, and technical program committee member of multiple international conferences including IEEE VTC, IEEE Globecom, IEEE ICC, and IWCMC.
\end{IEEEbiography}
\vskip -2\baselineskip plus -1fil

\begin{IEEEbiography}[{\includegraphics[width=1in, height=1.25in,clip,keepaspectratio]{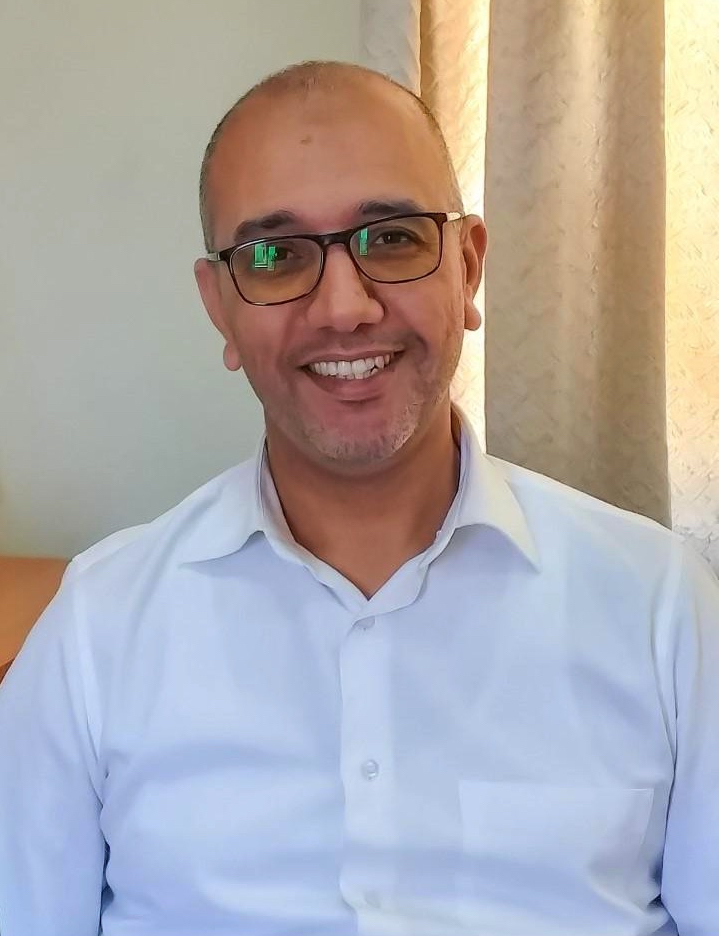}}]{Faissal El Bouanani} was born in Nador, Morocco, in 1974. He received the M.S. and Ph.D. degrees in network and communication engineering from Mohammed V University-Souissi, Rabat, Morocco, in 2004 and 2009, respectively. He has served as a Faculty Member with the University of Moulay Ismail, Meknes, from 1997 to 2009, before joining the National High School of IT/ENSIAS College of Engineering, Mohammed V University, Rabat, in 2009, where he is currently an Associate Professor. He advised many Ph.D. and master's students at both Mohammed V and Moulay Ismail Universities. So far, his research efforts have culminated in more than 75 papers in a wide variety of international conferences and journals. His current research interests include performance analysis and design of wireless communication systems. He has also been involved as a TPC Member in various conferences and IEEE journals. He is also an Associate Editor of IEEE ACCESS and Editor of Frontiers in Communications and Networks journals. His Ph.D. thesis was awarded the best one by Mohammed V University-Souissi, in 2010. He served as the TPC Chair of the ICSDE conferences and the General Co-Chair of ACOSIS'16 and CommNet'18 conferences. He serves as the General Chair of the 2019/2020 CommNet conferences. Dr. El Bouanani is also a Senior Member of the IEEE.
\end{IEEEbiography}
\vskip -2\baselineskip plus -1fil

\begin{IEEEbiography}[{\includegraphics[width=1in, height=1.25in,clip,keepaspectratio]{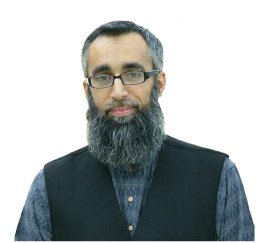}}]{Junaid Qadir} is a Professor at the Information Technology University (ITU), Lahore, Pakistan, where he also serves as the Chairperson of the Electrical Engineering Department. He is the Director of the IHSAN Research Lab at ITU (http://ihsanlab.itu.edu.pk/) since December 2015. He completed Ph.D. from University of New South Wales, Australia in 2008 and his Bachelor's in Electrical Engineering from University of Engineering and Technology (UET), Lahore, Pakistan in 2000. His primary research interests are in the areas of computer systems and networking, applied machine learning, using ICT for development (ICT4D); and engineering education. He has published more than 100 peer-reviewed articles at various high-quality research venues including more than 50 impact-factor journal publications at top international research journals including IEEE Communication Magazine, IEEE Journal on Selected Areas in Communication (JSAC), IEEE Communications Surveys and Tutorials (CST), and IEEE Transactions on Mobile Computing (TMC). He is an award-winning teacher who has been awarded the highest national teaching award in Pakistan—the higher education commission's (HEC) best university teacher award—for the year 2012-2013. He is an ACM Distinguished Speaker since January 2020. He is a senior member of both IEEE and ACM.
\end{IEEEbiography}
\vskip -2\baselineskip plus -1fil

\begin{IEEEbiography}[{\includegraphics[width=1in, height=1.25in,clip,keepaspectratio]{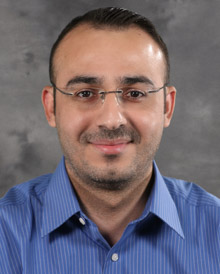}}]{Basheer Qolomany} (S'17) received the Ph.D. and second master's en-route to Ph.D. degrees in Computer Science from Western Michigan University (WMU), Kalamazoo, MI, USA, in 2018. He also received his B.Sc. and M.Sc. degrees in computer science from University of Mosul, Mosul city, Iraq, in 2008 and 2011, respectively. He is currently an Assistant Professor at Department of Cyber Systems, University of Nebraska at Kearney (UNK), Kearney, NE, USA. Previously, he served as a Visiting Assistant Professor at Department of Computer Science, Kennesaw State University (KSU), Marietta, GA, USA, in 2018-2019; a Graduate Doctoral Assistant at Department of Computer Science, WMU, in 2016-2018; he also served as a lecturer at Department of Computer Science, University of Duhok, Kurdistan region of Iraq, in 2011-2013. His research interests include machine learning, deep learning, Internet of Things, smart services, cloud computing, and big data analytics. Dr. Qolomany has served as a reviewer of multiple journals, including IEEE Internet of Things journal, Energies — Open Access Journal, and Elsevier - Computers and Electrical Engineering journal. He also served as a Technical Program Committee (TPC) member and a reviewer of some international conferences including IEEE Globecom, IEEE IWCMC, and IEEE VTC.
\end{IEEEbiography}
\vskip -2\baselineskip plus -1fil

\begin{IEEEbiography}[{\includegraphics[width=1in, height=1.25in,clip,keepaspectratio]{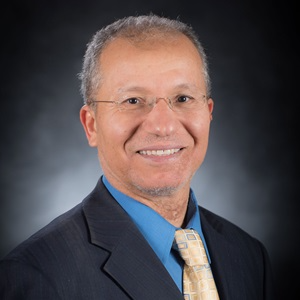}}]%
{Mohsen Guizani (S'85-M'89-SM'99-F'09)}
received the B.S. and M.S. degrees in electrical engineering and the M.S. and Ph.D. degrees in computer engineering from Syracuse University, in 1984, 1986, 1987, and 1990, respectively. He was the Associate Vice President of Qatar University, the Chair of the Computer Science Department, Western Michigan University, the Chair of the Computer Science Department, University of West Florida, and the Director of graduate studies at the University of Missouri–Columbia. He is currently a Professor with the Department of Computer Science and Engineering, Qatar University. He has authored or coauthored nine books and publications in refereed journals and conferences. His research interests include wireless communications and mobile computing, vehicular communications, smart grid, cloud computing, and security.
\end{IEEEbiography}

\end{document}